\newcommand{\tuple}[1]{\ensuremath{\left \langle #1 \right \rangle }}
\begin{document}
\title{Dynamic Sum Product Networks for \\Tractable Inference on Sequence Data \\ (Extended Version)}

\author{\name Mazen Melibari \email mmelibar@uwaterloo.ca \\
		\name Pascal Poupart \email ppoupart@uwaterloo.ca \\
       \addr David R. Cheriton School of Computer Science \\
       University of Waterloo \\ 
       Waterloo, Ontario, Canada 
       \AND
       \name Prashant Doshi \email pdoshi@cs.uga.edu \\
       \addr Department of Computer Science \\
       University of Georgia \\ 
       Athens, Georgia, USA
		\AND
       \name George Trimponias \email g.trimponias@huawei.com \\
       \addr Huawei Noah's Ark Lab \\
       Hong Kong}
% Leave editors like that       
\editor{Alessandro Antonucci, Giorgio Corani, and Cassio Polpo de Campos}
\maketitle

\begin{abstract}%
Sum-Product Networks (SPN) have recently emerged as a new class of tractable probabilistic models. Unlike Bayesian networks and Markov networks where inference may be exponential in the size of the network, inference in SPNs is in time linear in the size of the network. Since SPNs represent distributions over a fixed set of variables only, we propose dynamic sum product networks (DSPNs) as a generalization of SPNs for sequence data of varying length. A DSPN consists of a template network that is repeated as many times as needed to model data sequences of any length. We present a local search technique to learn the structure of the template network.  In contrast to dynamic Bayesian networks for which inference is generally exponential in the number of variables per time slice, DSPNs inherit the linear inference complexity of SPNs.  We demonstrate the advantages of DSPNs over DBNs and other models on several datasets of sequence data. 

\end{abstract}
\begin{keywords}
Tractable Probabilistic Models; Dynamic Sum-Product Networks; Sequence Data.
\end{keywords}

\section{Introduction}

Probabilistic graphical models~\citep{koller2009probabilistic} such as Bayesian networks (BNs) and Markov netwoks (MNs) provide a general framework to represent multivariate distributions while exploiting conditional independence.  Over the years, many approaches have been proposed to learn the structure of those networks~\citep{neapolitan2004learning}.  However, even if the resulting network is small, inference may be intractable (e.g., exponential in the size of the network) and practitioners must often resort to approximate inference techniques.  Recent work has focused on the development of alternative probabilistic models such as arithmetic circuits (ACs)~\citep{darwiche2003differential} and sum-product networks (SPNs)~\citep{poon2011sum} for which inference is guaranteed to be tractable (e.g., linear in the size of the network for SPNs and ACs).  This means that the networks learned from data can be directly used for inference without any further approximation.  So far, this work has focused on learning models for a fixed number of variables based on fixed-length data~\citep{lowd2012learning,dennis2012learning,gens2013learning,peharz2013greedy,rooshenas2014learning}.  

We present Dynamic Sum-Product Networks (DSPNs) as an extension to SPNs that model sequence data of varying length.  Similar to Dynamic Bayesian networks (DBNs)~\citep{dean1989model}, DSPNs consist of a {\em template network} that repeats as many times as the length of a data sequence.  We describe an invariance property for the template network that is sufficient to ensure that the resulting DSPN is valid (i.e., encodes a joint distribution) by being complete and decomposable.  Since existing structure learning algoritms for SPNs assume a fixed set of variables and fixed-length data, they cannot be used to learn the structure of a DSPN.  We propose a general anytime search-and-score framework with a specific local search technique to learn the structure of the template network that defines a DSPN based on data sequences of varying length.  We demonstrate the advantages of DSPNs over static SPNs, DBNs, hidden Markov models (HMMs) and recurrent neural networks (RNNs) with synthetic and real sequence data.
%------------------------------------------------------------------------------
\section{Background}
\label{sec:background}
%-----------------------------------------------------------------------------

%As DSPNs generalize SPNs to sequence data, this section presents a brief overview of SPNs and their properties. In addition, we review a few basic models and their corresponding SPN representations, which are utilized later in this paper. 

%\subsection{Sum-Product Network (SPNs)} %%% PD: Unless we are going to have another subsection, we don't need this one.

%We begin by reviewing the definition of an SPN.

\begin{definition}[Sum-Product Network~\citep{poon2011sum}]
A sum-product network (SPN) over $n$ binary variables $X_1,...,X_n$ is a rooted directed acyclic graph whose leaves are the indicators $I_{x_1},...,I_{x_n}$ and $I_{\bar{x}_1},...,I_{\bar{x}_n}$, and whose internal nodes are sums and products. Each edge $(i, j)$ emanating from a sum node $i$ has a non-negative weight, $w_{ij}$. The value of a product node is the product of the values of its children. The value of a sum node is $\sum_{j\in Ch(i)} w_{ij} v_j$, where $Ch(i)$ is the set of children of $i$ and $v_j$ is the value of node $j$. The value of an SPN is the value of its root.
\end{definition}

The value of an SPN can be seen as the output of a network polynomial whose variables are the indicator variables and the coefficients are the weights~\citep{darwiche2003differential}. This polynomial represents a joint probability distribution over the variables if the SPN is {\em valid}.  {\em Completeness} and {\em decomposability} (see below) are sufficient conditions for validity~\citep{darwiche2003differential, poon2011sum} that impose some conditions on the {\em scope} of each node, which is the set of variables that appear in the sub-SPN rooted at that node.  %The scope of a leaf node is the variable that the indicator refers to and the scope of an interior node is the union of the scopes of its children.  

\begin{definition}[Completeness]%~\citep{poon2011sum}]
An SPN is complete \textit{iff} all children of the same sum node have the same scope.
\end{definition}
\begin{definition}[Decomposability]%~\citep{poon2011sum}]
An SPN is decomposable iff all children of the same product node have disjoint scopes.
\end{definition}

%An SPN that is both complete and decomposable is valid, and it correctly computes the joint distribution. The {\em scope} of a node is the set of variables that appear in the sub-SPN rooted at that node.  The scope of a leaf node is the variable that the indicator refers to and the scope of an interior node is the union of the scopes of its children.

%A sum-product network over the variables $X_1,...,X_n$ is{\em valid} when it represents a joint distribution $\Pr(X_1,...,X_n)$ over those %variables.  Completeness and decomposability are sufficient conditions to ensure the validity of an SPN~\cite{poon2011sum}.

Several basic distributions can be encoded by simple SPNs. For instance, a univariate distribution can be encoded using an SPN whose root node is a sum that is linked to each indicator of a single variable $X$ (Fig.~\ref{fig:univirate-spn}).  A factored distribution over a set of variables $X_1,...,X_n$ is encoded by a root product node linked to univariate distributions for each variable $X_i$ (Fig.~\ref{fig:factored-spn}).  A naive Bayes model is encoded by a root sum node linked to a set of factored distributions (Fig.~\ref{fig:naivebayes-spn}) and a product of naive Bayes models is encoded by a root product node linked to a set of naive Bayes models (Fig.~\ref{fig:to-random-part-b}).

Inference queries $\Pr(X=x|Y=y)$ can be answered by taking the ratio of the values obtained by two bottom up passes of an SPN. In the first pass, we initialize $I_{x}=1$, $I_{\bar{x}}=0$, $I_{y}= 1$, $I_{\bar{y}}=0$ and set all remaining indicators to 1 in order to compute a value proportional to the desired query.  In the second pass, we initialize $I_{y}= 1$, $I_{\bar{y}}=0$ and set all remaining indicators to 1 in order to compute the normalization constant.  The linear complexity of inference in SPNs is an appealing property given that inference for other models such as BNs is exponential in the size of the network in the worst case.

%Inference over the variables involves expanding the SPN into its network polynomial and evaluating it. As expanding the SPN requires a single bottom-up pass over all nodes in the network, the inference is always linear in the size of the SPN. This is an appealing property given that inference for other models such as BNs is exponential in the size of the network in the worst case.    

While SPNs are computational graphs based on which it is difficult to infer the relationships between the variables (e.g., conditional independence), \cite{zhao2015spnbn} showed how to convert SPNs into equivalent bipartite Bayesian networks without any exponential blow up.  In contrast, the compilation of a Bayesian network into an equivalent SPN may yield an exponential blow up.  SPNs are syntactically equivalent to arithmetic circuits (ACs)~\citep{park2004differential} in the sense that they can be reduced to each other in linear time and space.     

%\subsection{Dynamic Arithmetic Circuits}
An extension to network polynomials for dynamic Bayesian networks (DBNs) was given in \citep{brandherm2004extension}. A procedure based on variable elimination is proposed to compile a DBN into a recursive network polynomial that can be represented by a special AC that we call dynamic AC. %, where the network polynomial for time step $t$ is a function of the network polynomial at time step $t-1$. The output of each time step's network polynomial is a table over the cartesian product of the values of the variables in the belief state (the nodes at time step $t-1$ that are parents of nodes at time step $t$). The recursive network polynomial can, essentially, be obtained by performing variable elimination with a specified order on the DBN. The recursive network polynomial can be represented with a special arithmetic circuit (AC) that has multiple roots and placeholders to plug the roots of the previous time step's network polynomial. We call this representation Dynamic Arithmetic Circuits (Dynamic ACs). Dynamic ACs are compiled representations, which means that we have to start from a known DBN and then convert it to a Dynamic AC. 
 Since there is a risk that the compiled dynamic AC will be intractable,
%because of the the size of the output table or the internal representation. For the output table, 
the authors use the Boyen-Koller~\citep{boyen1998tractable} method to approximate the output with a factored representation.
%large output table with several smaller ones. The internal representation, on the other hand, may still suffer of an exponential blow up since it depends on the variable elimination order and have the same complexity as the variable elimination algorithm. 
Thus, compiling a DBN to a dynamic AC does not reduce the complexity of inference, but only makes it linear in the size of the compiled dynamic AC, which could be intractable. In contrast, we propose an approach to learn tractable models directly from sequence data. %directly a model that can be learned directly from data and we present a structure learning algorithm that guarantees to find a tractable model.
\begin{figure}[!tb]
\centering
	\subfigure[]{
		\begin{minipage}[b]{0.10\columnwidth}
				\centering
				\includegraphics[width=\textwidth]{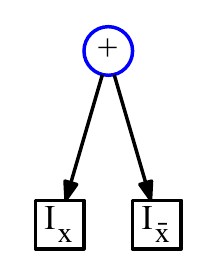}		
		\end{minipage}
		\label{fig:univirate-spn}
	}
	~
	\subfigure[]{
		\begin{minipage}[b]{0.24\columnwidth}	
				\centering					
				\includegraphics[width=\textwidth]{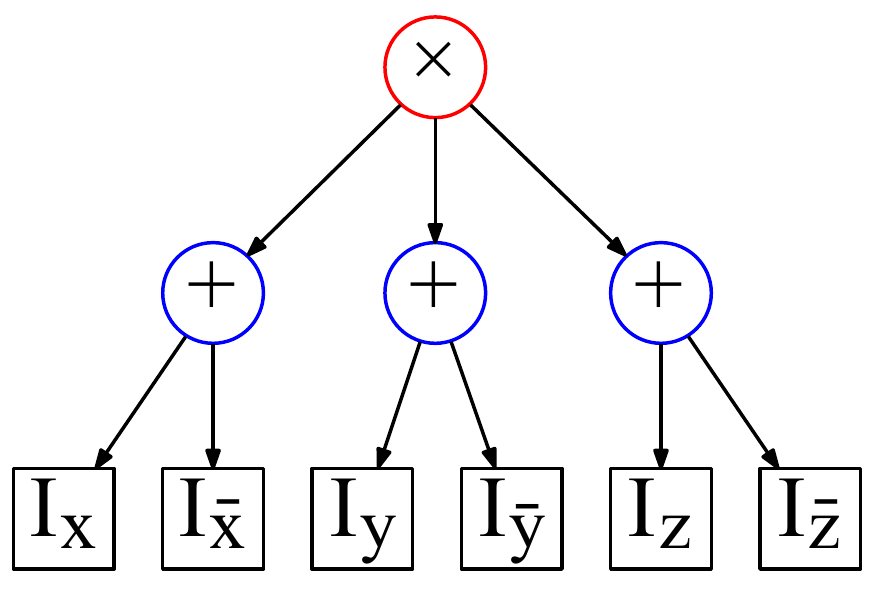}
		\end{minipage}
		\label{fig:factored-spn}
	}
	~
	\subfigure[]{
		\begin{minipage}[b]{0.24\columnwidth}
				\centering					
				\includegraphics[width=\textwidth]{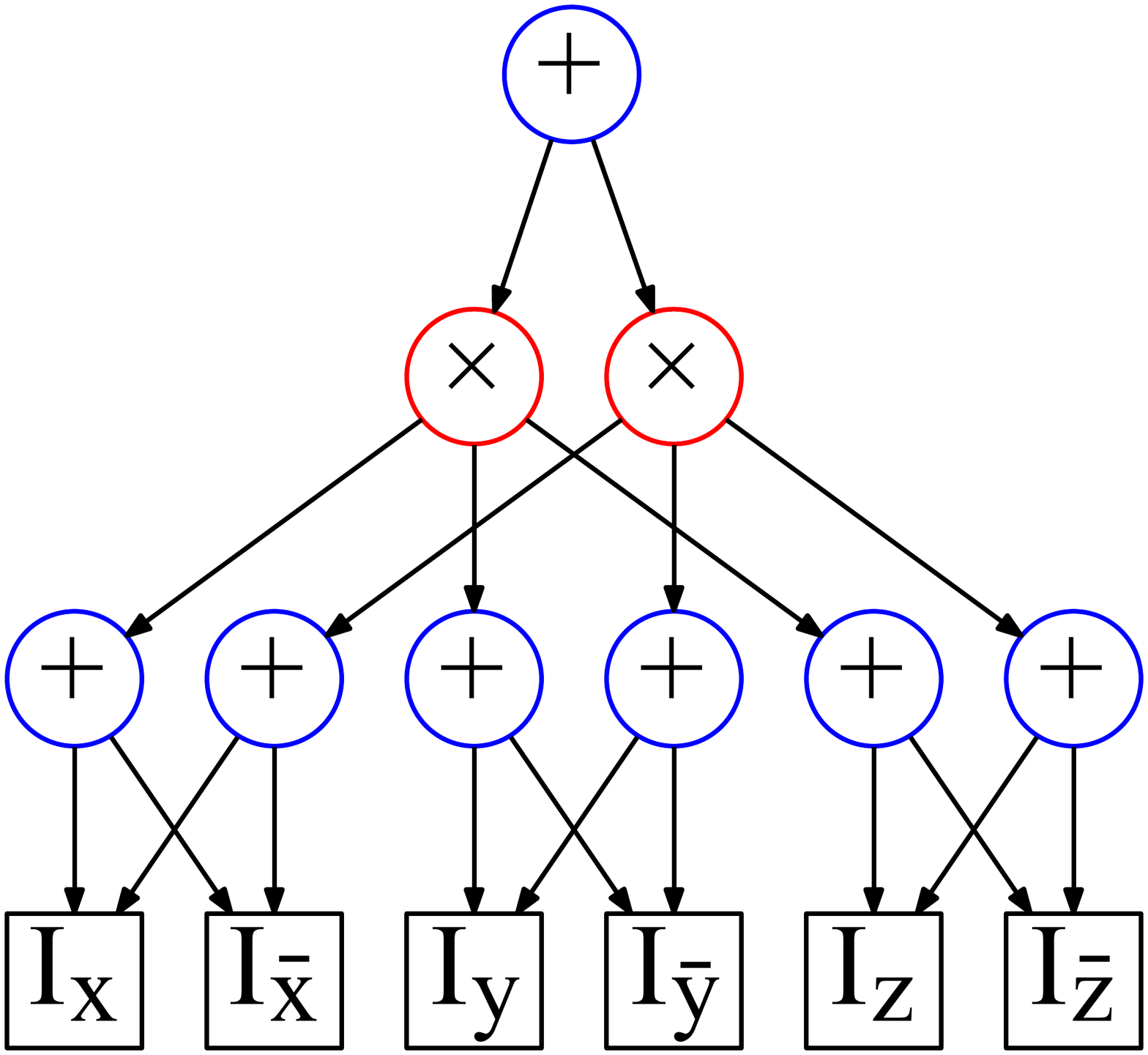}
		\end{minipage}
		\label{fig:naivebayes-spn}
	}	
    \subfigure[]{
		\begin{minipage}[t]{0.24\columnwidth}
			\centering		
				\includegraphics[width=\textwidth]{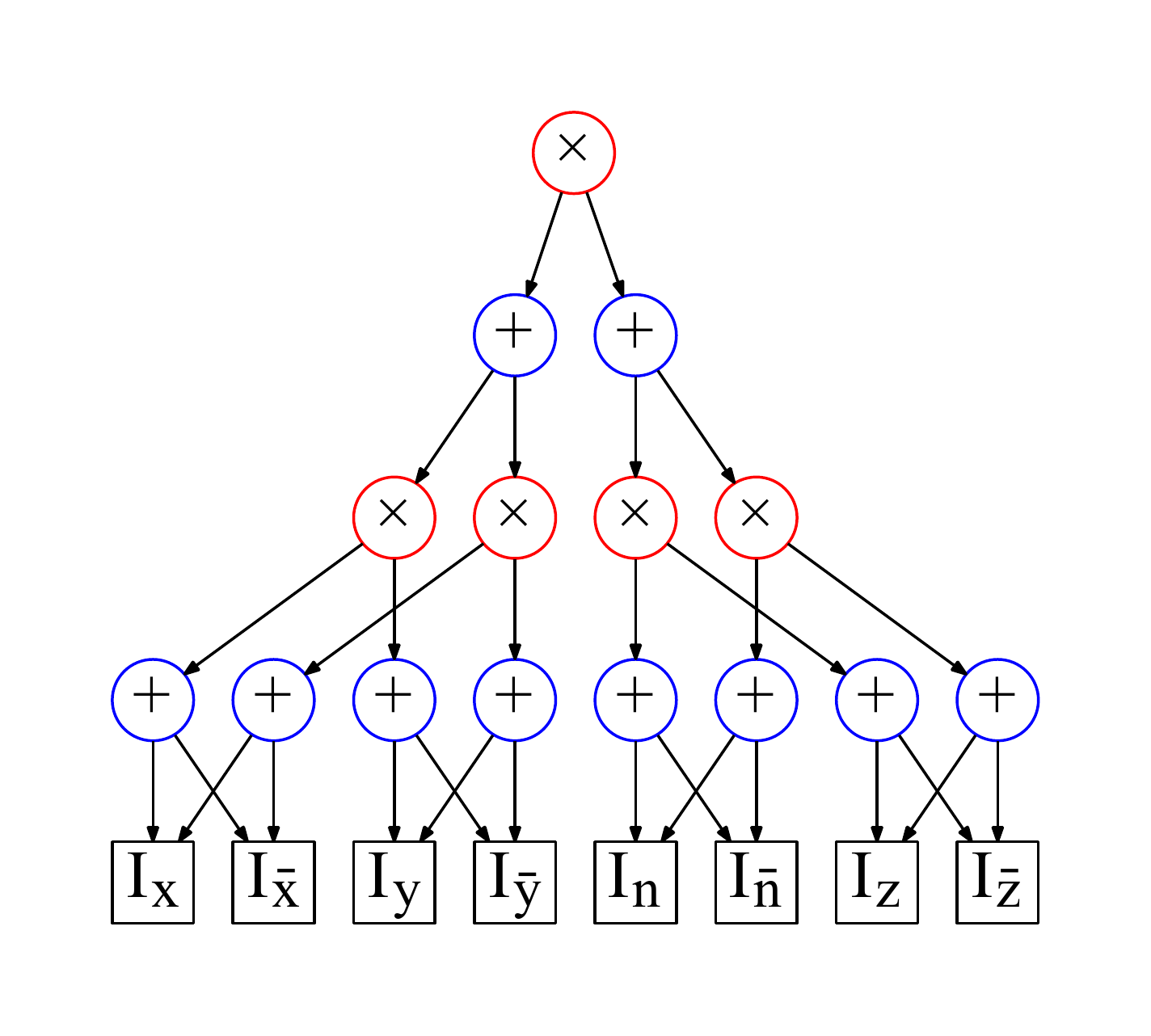}
		\end{minipage}
		\label{fig:to-random-part-b}
    }
\caption{(a) Univariate distribution over a binary variable $x$. (b) Factored distribution over three binary variables $x$, $y$, and $z$. (c) naive Bayes model over three binary variables $x$, $y$, and $z$. (d) Product of naive Bayes models.}
\label{fig:add-example}
\end{figure}
%----------------------------------------------------------
\section{Dynamic Sum-Product Networks}
\label{sec:DSPN}
%----------------------------------------------------------

%Sequence data such as time series data is typically generated by a dynamic process. Such data is conveniently modeled using structure that may be repeated as many times as the length of the process and a way to model the dependencies between the repeated structure. In this context,   
We propose dynamic SPNs (DSPNs) as a generalization of SPNs for modeling sequence data of varying length. While DSPNs are equivalent to dynamic ACs (i.e., reducible to each other without any blow up), we develop a structure learning algorithm that learns a tractable DSPN directly from sequence data (instead of learning a DBN from data and then compiling it into a potentially exponentially larger DSPN or dynamic AC).  We also show sufficient conditions to ensure that estimated DSPNs are valid and therefore permit exact sequential inference in linear time.

Consider temporal sequence data that is generated by $n$ variables (or features) over $T$ time steps:  
\begingroup\makeatletter\def\f@size{8}\check@mathfonts
$\tuple{\tuple{X_1,X_2,\ldots,X_n}^1,\tuple{X_1,X_2,\ldots,X_n}^2,\ldots,\tuple{X_1,X_2,\ldots,X_n}^T}$
\endgroup
 where $X_i$, $i = 1 \ldots n$ is a random variable in one time slice and $T$ may vary with each sequence. Note that non-temporal sequence data such as sentences (sequence of words) can also be represented by sequences of repeated features. We will label the set of repeating variables as a {\em slice} and we will index slices by $t$ even if the sequence is not temporal, for uniformity.

A DSPN models sequences of varying length with a fixed number of parameters by using a template that is repeated at each slice. This is analogous to DBNs where the template corresponds to the network that connects two consecutive slices.  %We define the template SPN for each slice $\tuple{X_1,X_2,\ldots,X_n}^T$ as follows.  
\begin{definition}[Template network]
A template network for a slice of $n$ binary variables at time $t$, $\tuple{X_1,X_2,\ldots,X_n}^t$,  is a directed acyclic graph with $k$ roots and $k+2n$ leaf nodes. The $2n$ leaf nodes are the indicator variables, $I_{x_1^t},I_{x_2^t},\ldots,I_{x_n^t},I_{\bar{x}_1^t},I_{\bar{x}_2^t},\ldots,I_{\bar{x}_n^t}$.  The remaining $k$ leaves and an equal number of roots are interface nodes to and from the template for the previous and next slices, respectively. The interface and interior nodes are either sum or product nodes.  Each edge $(i, j)$ emanating from a sum node $i$ has a non-negative weight $w_{ij}$ as in a SPN. Furthermore, we define a bijective mapping $f$ between the input and output interface nodes.
\label{def:template} 
\end{definition}
%%%PD: How about showing the template network here. Comment it out if you don't like it.
Fig.~\ref{fig:initial-structure} shows a generic template network. In addition, we define two special networks.

\begin{figure}[!tb]
\centering
	\subfigure[]{
      \begin{minipage}{.3\textwidth}
        \centering
        \includegraphics[width=\linewidth]{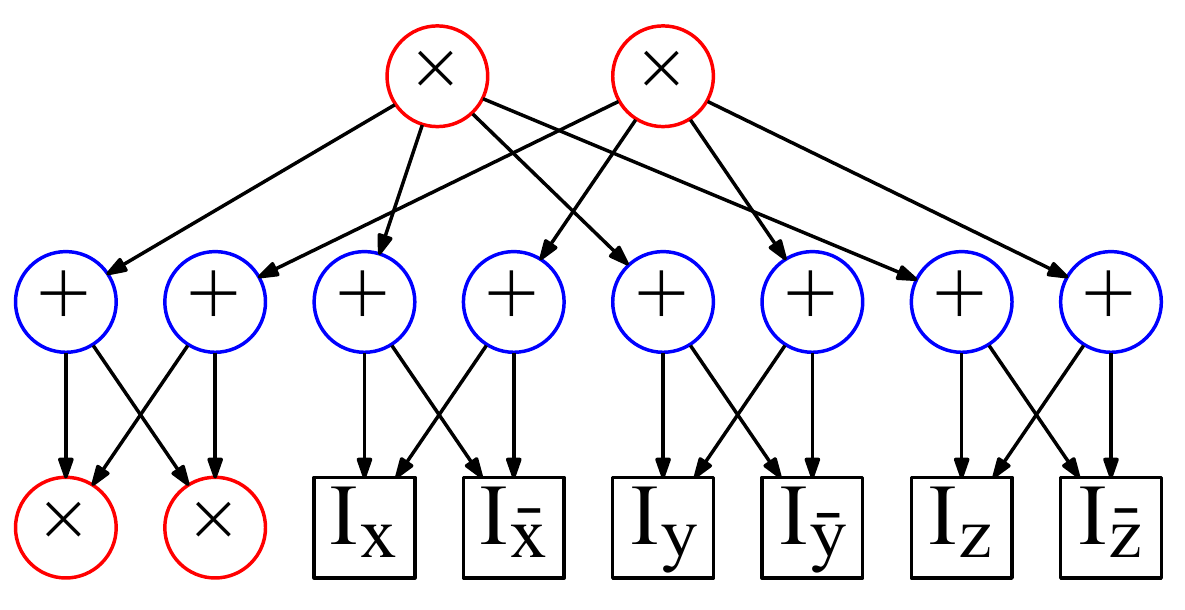}
      \end{minipage}%
		\label{fig:initial-structure}
	}
	~
	\subfigure[]{
      \begin{minipage}{.4\textwidth}
        \centering
        \includegraphics[width=\linewidth]{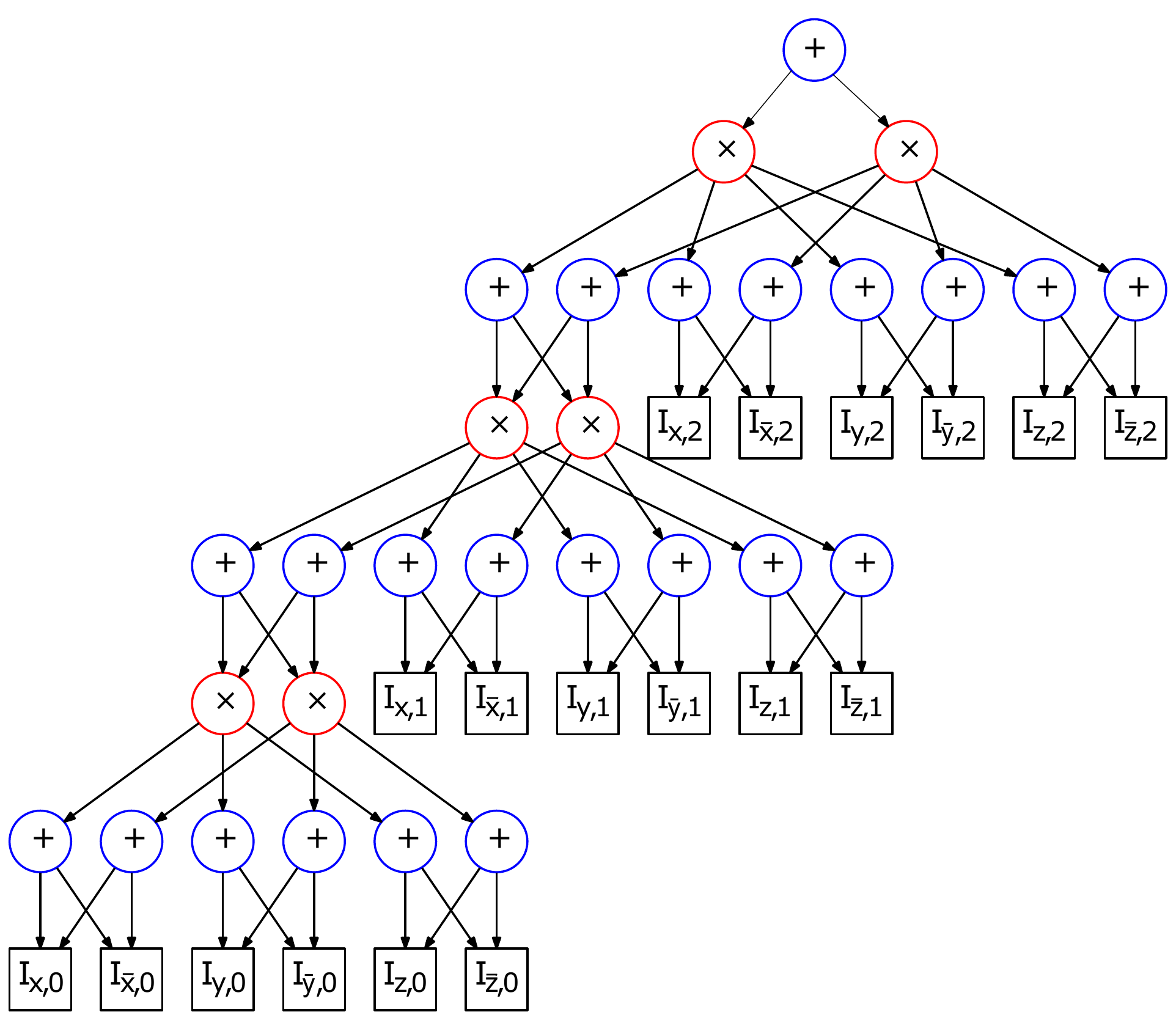}
      \end{minipage}
		\label{fig:unrolled-3slices}
	}
\caption{(a) An example of a generic template network. Notice the interface nodes in red. (b) A generic example of a complete DSPN unrolled over 3 time slices. Two template networks are stacked on the bottom network and capped by the top network.}
\label{}
\end{figure}

% \begin{figure}
% \centering
% \begin{minipage}{.35\textwidth}
%   \centering
%   \includegraphics[width=\linewidth]{initial-structure}
%   \captionof{figure}{An example of a generic template network. Notice the interface nodes in red.}
%   \label{fig:initial-structure}
% \end{minipage}%
% \hspace{5mm}
% \begin{minipage}{.6\textwidth}
%   \centering
%   \includegraphics[width=\linewidth]{}
%   \captionof{figure}{A generic example of a complete DSPN unrolled over 3 time slices. Template network is stacked on the bottom network and capped by the top network.}
%   \label{fig:unrolled-3slices}
% \end{minipage}
% \end{figure}

%\begin{figure}[!ht]
%    	\begin{minipage}[b]{\columnwidth}
%			\centering
			%\includegraphics[width=0.35\textwidth]{initial-structure}		
%\includegraphics[width=0.6\textwidth]{unrolled-3slices}		
%		\end{minipage}
%        \caption{An example of a generic template network. Notice the interface nodes in red.}
%		\label{fig:initial-structure}
%\end{figure}

\begin{definition}[Bottom network]
A bottom network for the first slice of $n$ binary variables, \\ $\tuple{X_1, X_2, \ldots, X_n}^1$,  is a directed acyclic graph with $k$ roots and $2n$ leaf nodes. The $2n$ leaf nodes are the indicator variables, $I_{x_1^1},I_{x_2^1},\ldots,I_{x_n^1},I_{\bar{x}_1^1},I_{\bar{x}_2^1},...,I_{\bar{x}_n^1}$.  The $k$ roots are interface nodes to the template network for the next slice. The interface and interior nodes are either sum or product nodes.  Each edge $(i, j)$ emanating from a sum node $i$ has a non-negative weight $w_{ij}$ as in a SPN. 
\label{def:bottom}
\end{definition}

\begin{definition}[Top network]
Define a top network as a rooted directed acyclic graph composed of sum and product nodes with $k$ leaves. The leaves of this network are interface nodes, which were introduced previously. %We require that each sum node be complete and each product node be decomposable. 
Each edge $(i, j)$ emanating from a sum node $i$ has a non-negative weight $w_{ij}$ as in a SPN. 
\label{def:top}
\end{definition}

Consider a data sequence of length $T$.  A DSPN of $T$ slices is obtained by stacking $T-1$ template networks of Def.~\ref{def:template} on top of a bottom network. This is capped by a top network. Two networks are stacked by merging the input interface nodes of the upper network with the output interface nodes of the lower network. %A template is stacked on a bottom network by merging the output interface nodes of the bottom network with the input interface nodes of the template. Analogously, the top network is stacked on a template by merging the input interface nodes of the top network with the output interface nodes of the template. 
Figure~\ref{fig:unrolled-3slices} shows an example with 3 slices of 2 variables each. 
%Not being connected to anything below, we remove all nodes and subgraphs rooted at that node that do not have an indicator variable as a descendent in the bottom copy of the template. The nodes removed are shaded in Fig.~\ref{fig:dspn}. 
%Furthermore, we add a root that connects to the output interface nodes of the top copy of the template. 
%\begin{figure}[!ht]
%    	\begin{minipage}[b]{\columnwidth}
%			\centering
			%\includegraphics[width=0.6\textwidth]{unrolled-3slices}		
%		\end{minipage}
%        \caption{A generic example of a complete DSPN unrolled over 3 time slices. Template network is stacked on the bottom network and capped by the top network.}
%		\label{fig:unrolled-3slices}
%\end{figure}

%While a DSPN conforms to the structure required of a SPN, we note that the bottom, template and top networks are not SPNs when considered separately.  The bottom and template networks have multiple roots while an SPN has a single root.  The template and top networks also have leaves that are not indicator variables while all the leaves of an SPN are indicator variables. %We denote a DSPN by $\mathcal{DS}$. 

As we mentioned previously, completeness and decomposability are sufficient to ensure the validity of an SPN. While one could check that each sum node in the DSPN is complete and each product node is decomposable, we provide a simpler way to ensure that any DSPN is complete and decomposable.
In particular,  we describe an invariance property for the template network that can be verified directly in the template without unrolling the DSPN. This invariance property is sufficient to ensure that completeness and decomposability are satisfied in the DSPN for any number of slices.
%In order for the resulting network to represent a distribution over a sequence, we require that the network satisfy the completeness and decomposability properties.   
%One could unroll the DSPN by stacking several copies of the template and verifying that each sum node is complete and each product node is %decomposable.  However it is not clear that by verifying completeness and decomposability for $T$ stacked copies of the template that %completeness and decomposability will still hold if we stacked more copies. 

%\begin{definition}[Invariance]
%A template network over $\tuple{X_1,...,X_n}^t$ is invariant when we can assign a scope that excludes variables $\{X_1^t,...,X_n^t\}$ to each input interface node and for all pairs of input interface nodes, $i$ and $j$, the following holds: 
%\begingroup\makeatletter\def\f@size{8}\check@mathfonts
%\begin{align}
%& \left[ \left ( scope(i) = scope(j)\right ) \iff \left ( scope(f(i)) =scope(f(j)) \right ) \right] \nonumber\\
%\vee & \left[ \left ( scope(i) \cap scope(j) = \emptyset \right ) \iff \left ( scope(f(i)) \cap scope(f(j)) = \emptyset \right ) \right]
%\end{align}
%\endgroup
%where $f$ is the bijective mapping that indicates which input nodes correspond to which output nodes in the interface. Furthermore, each interior sum node is complete and each interior product node is decomposable.    
%\label{def:invariance}
%\end{definition}
\begin{definition}[Invariance]
A template network over $\tuple{X_1,...,X_n}^t$ is invariant when the scope of each input interface node excludes variables $\{X_1^t,...,X_n^t\}$ and for all pairs of input interface nodes, $i$ and $j$, the following properties hold:
%when we can assign a scope that  to each input interface node  
\begin{enumerate}[]
\item $scope(i) = scope(j) \vee scope(i) \cap scope(j) = \emptyset$
\item $scope(i) = scope(j) \iff scope(f(i)) = scope(f(j))$
\item 
$scope(i) \cap scope(j) =
\emptyset \iff scope(f(i)) \cap scope(f(j)) = \emptyset
$

\item all interior and output sum nodes are complete
\item all interior and output product nodes are decomposable
\end{enumerate}

Here $f$ is the bijective mapping that indicates which input nodes correspond to which output nodes in the interface. 
\label{def:invariance}
\end{definition}
Intuitively, a template network is invariant if we can assign a scope to each input interface node such that each pair of input interface nodes has the same scope or disjoint scopes, and the same relation holds between the scopes of the corresponding output nodes. Scopes of pairs of corresponding interface nodes must be the same or disjoint because a product node is decomposable when its children have disjoint scopes and a sum node is complete when its children have identical scope.  Hence, verifying the identity or disjoint relation of the scopes for every pair of input interface nodes helps us in verifying the completeness and decomposability of the remaining nodes in the template. Theorem~\ref{thm:DSPN_valid} below shows that the invariance property of Def.~\ref{def:invariance} can be used to ensure that the corresponding DSPN is complete and decomposable.

\begin{theorem}
If (a) the bottom network is complete and decomposable, (b) the scopes of all pairs of output interface nodes of the bottom network are either identical or disjoint, (c) the scopes of the output interface nodes of the bottom network can be used to assign scopes to the input interface nodes of the template and top networks in such a way that the template network is invariant and the top network is complete and decomposable, then the corresponding DSPN is complete and decomposable.
\label{thm:DSPN_valid}
\end{theorem}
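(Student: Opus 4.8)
The plan is to proceed by induction on the number of slices, strengthening the statement with an auxiliary invariant about the interface scopes. Let $N_m$ denote the network obtained by stacking $m-1$ template networks on top of the bottom network (so $N_1$ is the bottom network alone, before the top cap). I would prove, for every $m \geq 1$, the conjunction of two facts: $N_m$ is complete and decomposable, and the pairwise pattern of identical/disjoint relations among the scopes of the output interface nodes of $N_m$ is exactly the pattern exhibited by the output interface nodes of the bottom network. Carrying the pattern (and not merely the identical-or-disjoint property) through the induction is what lets each newly stacked template be treated under the same abstract scope assignment that certified its invariance in hypothesis (c). The base case $m=1$ is immediate: completeness and decomposability are hypothesis (a), the identical-or-disjoint property is hypothesis (b), and the pattern equals itself trivially. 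The final DSPN is then obtained by capping $N_T$ with the top network, handled at the very end.

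The crux is a relabeling observation. The scope of every interior and output node of a template is built from the scopes assigned to its input interface nodes together with the fresh slice indicators $\{X_1^t,\ldots,X_n^t\}$ by repeated unions. Hence whether two such node scopes are equal, and whether they are disjoint, is determined entirely by which input interface scopes coincide, which are disjoint, and the fact (guaranteed by the scope-exclusion clause of Definition~\ref{def:invariance}) that the slice indicators are disjoint from every interface scope. Consequently, if a concrete assignment of scopes to the input interface nodes displays the same identical/disjoint pattern as the abstract assignment under which the template was verified invariant, then the completeness and decomposability certified abstractly by clauses~4 and~5 of Definition~\ref{def:invariance} transfer verbatim to the concrete scopes. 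This is the step I expect to be the main obstacle, since it requires formalizing that only the equality/disjointness pattern of the interface scopes is relevant to local completeness and decomposability, and that this pattern is transported faithfully across a merge.

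For the inductive step, stacking a template on $N_m$ merges the output interface nodes of $N_m$ with the input interface nodes of the template (Definition~\ref{def:template}), so each input interface node of the template acquires the concrete scope of the corresponding output node of $N_m$. By the induction hypothesis these concrete scopes realize the same pattern as the bottom network's output interface nodes, which is precisely the abstract assignment of hypothesis (c); in particular they are pairwise identical or disjoint, matching clause~1. The relabeling observation then yields that every interior and output node contributed by this template is complete or decomposable, so $N_{m+1}$ is complete and decomposable. Finally, clauses~2 and~3, which state that the bijection $f$ preserves both the identical relation and the disjoint relation, carry the pattern from the input interface nodes to the output interface nodes of the template; since the latter are the output interface nodes of $N_{m+1}$, the auxiliary invariant is re-established and the induction closes.

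Capping with the top network repeats this reasoning once more: the leaves of the top network inherit the scopes of $N_T$'s output interface nodes, which again realize the pattern assumed in hypothesis (c), so by the relabeling observation the top network is complete and decomposable on these concrete scopes. Its root is the root of the DSPN, and every node of the DSPN has by then been shown complete or decomposable, giving the claim. I expect all of the work beyond the relabeling observation to be routine bookkeeping; the delicate point throughout is ensuring that the concrete scopes produced at each merge genuinely match the abstract assignment used to verify the template's invariance, which is exactly why the pattern is threaded through the induction hypothesis rather than only the weaker identical-or-disjoint property.
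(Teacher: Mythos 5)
Your proposal is correct and follows essentially the same route as the paper: an induction over the stacked templates whose key step is exactly your ``relabeling observation,'' which the paper formalizes as its scope-union lemma, the preservation-of-identity/disjointness lemma, and the resulting corollary that completeness and decomposability survive relabeling. Your threading of the identical/disjoint pattern through the induction hypothesis corresponds to the paper's lemma that a stack of invariant templates remains invariant, and the final capping argument for the top network is handled identically.
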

%\begin{theorem}
%If the bottom network is complete and decomposable, and the scopes of the output interface nodes of the bottom network can be used to assign scopes to the input interface nodes of the template and top networks in such a way that the template network is invariant and the top network is complete and decomposable, then the corresponding DSPN is complete and decomposable.
%\label{thm:DSPN_valid}
%\end{theorem}
\begin{proof}
%We prove the above theorem by induction.  For the base case, consider the bottom copy of the template SPN.  Since the bottom copy does not have any input interface node, it is equivalent to a full copy of the template where the interface input nodes are set to 1 and their scope is set to $\emptyset$.  Therefore all nodes that are missing from the bottom copy of the template will have value 1 and an empty scope.  Since assigning an empty scope to each input interface node satisfies the invariance, completeness and decomposability properties, the bottom copy is complete and decomposable.   Assume by induction that a DSPN of $t$ copies of the template SPN is complete and decomposable.  The next copy of the template SPN for slice $t+1$ also has input interface nodes with scopes that satisfy the invariance, completeness and decomposability properties since they correspond to the output interface nodes of the previous slice. Hence, all remaining nodes of the template copy for slice $t+1$ are complete and decomposable. 
We sketch a proof by induction (see Appendix~\ref{appendix} for more details).  For the base case, consider a single-slice DSPN (bottom network) capped with a top networks.  The bottom network is complete and decomposable by assumption.   Since the interface output nodes of the bottom network are merged with the input interface nodes of the top network, they are assigned the same scope, which ensures that the top network is also complete and decomposable.  %Hence a single-slice DSPN is complete and decomposable. 
For the induction step, assume that a DSPN of $T$ slices is complete and decomposable.  Consider a DSPN of $T+1$ slices that shares the same bottom network and the same first $T-1$ copies of the template network as the DSPN of $T$ slices.  Hence the bottom network and the first $T-1$ copies of the template network in the DSPN of $T+1$ slices are complete and decomposable.  Since the next copy of the template network is invariant when its input interface nodes are assigned the scopes with the same identity and disjoint relations as the scopes of the output interface nodes of the bottom network, it is also complete and decomposable.  Similarly, the top network is complete and decomposable.% since its interface nodes inherit the scopes of the interface nodes of the template network which have the same identity and disjoint relations as the output interface nodes of the bottom network.  Hence DSPNs of any length are complete and decomposable.
\end{proof}

%In summary, a DSPN is an SPN with a repeated structure and tied parameters specified by the template.  The likelihood of a data sequence can be computed by instantiating the indicator variables accordingly and propagating the values to the root.  Hence inference can be performed in linear time with respect to the size of the network.

%\begin{figure}[tb]
%\centering
%	\subfigure[]{
%		\begin{minipage}[b]{0.4\textwidth}
%			\centering
%			\includegraphics[width=0.7\textwidth]{initial-structure}		
%		\end{minipage}
%		\label{fig:initial-structure}
%	}
%	~
%	\subfigure[]{
%		\begin{minipage}[b]{0.55\textwidth}
%		\end{minipage}
%		\label{fig:unrolled-3slices}
%	}
%\caption{(a) shows an example of a template network. (b) is an unrolled template for three time slices}
%\label{fig:dspns}
%\end{figure}
%--------------------------------------------------------
\section{Structure Learning of DSPN}
\label{sec:structure}
%--------------------------------------------------------

As a DSPN is an SPN, we could ignore the repeated structure and learn an SPN for the number of variables corresponding to the longest sequence.  Shorter sequences could be treated as sequences with missing data for the unobserved slices. Unfortunately, this is intractable for very long sequences because the inability to model the repeated structure implies that the SPN will be very large and the learning computationally intensive. This approach may be feasible for datasets that contain only short sequences, nevertheless the amount of data needed may be prohibitively large because in the absence of a repeating structure the number of parameters is much higher. Furthermore, the SPN could be asked to perform inference on a sequence that is longer than any of the training sequences, and it is likely to perform poorly.  

Alternately, it is tempting to apply existing algorithms to learn the repeated structure of the DSPN.  Unfortunately, this is not possible.  As existing algorithms assume a fixed set of variables, one could break data sequences into fixed-length segments corresponding to each slice.  An SPN can be learned from this dataset of segments. However, it is not clear how to use the resulting SPN to construct a template network because a regular SPN has a single root while the template network has multiple roots and an equal number of input leaves that are not indicator variables.  One would have to treat each segment as independent data instances and could not answer queries about the probability of some variables in one slice given the values of other variables in other slices. 

We present an {\em anytime search-and-score} framework to learn the structure of the template SPN in a DSPN.  %Alg.~\ref{alg:local-search} outlines the local search technique that iteratively refines the structure of the template SPN. 
It starts with an arbitrary structure and then generates several neighbouring structures.  It ranks the neighbouring structures according to a scoring function and selects the best neighbour.  These steps are repeated until a stopping criterion is met.  This framework can be instantiated in multiple ways based on the choice for the initial structure, the neighbour-generation process, the scoring function and the stopping criterion. We proceed with the description of a specific instantiation below, although other instantiations are possible.

%\begin{algorithm}[!ht]
%\caption{The Anytime Search-and-Score Framework}
%\label{alg:local-search}
%\begin{algorithmic} 
%\REQUIRE data, set of variables for a slice $\tuple{X_1,...,X_n}$ 
%\ENSURE $templNet$: template network
%\STATE $templNet \leftarrow initialStructure(data,\tuple{X_1,...,X_n})$ 
%\STATE \textbf{repeat} $templNet \leftarrow neighbour(templNet,data)$ \textbf{until} stopping criterion is met
%\end{algorithmic}
%\end{algorithm}

Without loss of generality, we propose to use product nodes as the interface nodes for both the input and output of the template network.\footnote{WLOG assume that the DSPN alternates between layers of sum and product nodes. Since a DSPN consists of a repeated structure, there is flexibility in choosing the interfaces of the template.  We chose the interfaces to be at layers of product nodes, but the interfaces could be shifted by one level to layers of sum nodes or even traverse several layers to obtain a mixture of product and sum nodes.  These boundaries are all equivalent subject to suitable adjustments to the bottom and top networks.} We also propose to use a bottom network that is identical to the template network after removing the nodes that do not have any indicator variable as descendent.  This way we can design a single algorithm to learn the structure of the template network since the bottom network will be automatically determined from the learned template.  We also propose to fix the top network to a root sum node directly linked to all the input product nodes. For the template network, we initialize the SPN rooted at each output product node to a factored model of univariate distributions. Figure~\ref{fig:initial-structure} shows an example of this initial structure with two interface nodes and three variables.  Each output product node has four children where each child is a sum node corresponding to a univariate distribution.  Three of those children are univariate distributions linked to the indicators of the three variables, while the fourth sum node is a distribution over the interface input nodes. On merging the interface nodes for repeated instantiations of the template, we obtain a hierarchical mixture model. We begin with a single interface node and iteratively increase their number until the score 
% produced by the scoring function 
stops improving.  Alg.~\ref{alg:initial-structure} summarizes the steps to compute the initial structure.

%\begin{algorithm}[!ht]
%\caption{Initial Structure}
%\label{alg:initial-structure}
%\begin{algorithmic} 
%\REQUIRE $trainingSet, validationSet$ 
%\ENSURE $templateNet$
%\STATE $newTemplate \leftarrow train(factoredDistribution,trainingSet)$ 
%\REPEAT
%	\STATE $templateNet \leftarrow newTemplate$
%	\STATE $newTemplate \leftarrow train(templateNet \cup \{factoredDistribution\},trainingSet)$
%\UNTIL $likelihood(newTemplate,validationSet) < likelihood(templateNet,validationSet)$
%\end{algorithmic}
%\end{algorithm}

%%%PD: We should avoid the use of f below because f was used previously to denote the bijection.

\begin{algorithm}[!ht]
\caption{Initial Structure}
\label{alg:initial-structure}
\begin{algorithmic} 
\REQUIRE $trainSet$, $validationSet$, $\tuple{X_1,...,X_n}$ (variables for a slice) 
\ENSURE $templNet$: Initial Template Network Structure
\STATE $g \leftarrow factoredDistribution(\tuple{X_1,...,X_n})$
\STATE $newTempl \leftarrow train(g,trainSet)$ 
\STATE \textbf{repeat} $templNet \leftarrow newTempl$; $newTempl \leftarrow train(templNet \cup \{g\},trainSet)$ 
\STATE \textbf{until} $likelihood(newTempl,validationSet) <likelihood(templNet,validationSet)$
\end{algorithmic}
\end{algorithm}

A simple scoring function is to use the likelihood of the data %, %$\mathcal{DS}(e)$, 
since exact inference in DSPNs can be done quickly. If the goal is to produce a generative model of the data, then the likelihood of the data is a natural criterion.  If the goal is to produce a discriminative model for classification, then the conditional likelihood of some class variables given the remaining variables is a suitable criterion.  For a given structure, parameters can be estimated using various parameter learning algorithms including gradient ascent~\citep{poon2011sum} and expectation maximization~\citep{poon2011sum,peharz2015foundations}.  

Our neighbour generation process (Alg.~\ref{alg:neighbour}) begins by sampling a product node uniformly and replacing the sub-SPN rooted at that product node by a new sub-SPN. Note that to satisfy the decomposability property, a product node must partition its scope into disjoint scopes for each of its children. Also note that different partitions of the scope can be seen as different conditional independencies between the variables~\citep{gens2013learning}. Hence, the search space of a product node generally corresponds to the set of all partitions of its scope.
%, where the blocks of a set partition correspond to the scope of the children of the product node. 
We use the 'restricted growth string (RGS)' encoding of partitions to define a lexicographical order of the set of all possible partitions~\citep{Knuth:2006:ACP:1121689}. We can select the next partition according to the lexicographic ordering or by sampling from a distribution over all possible partitions. The distribution can be uniform in the absence of prior knowledge or an informed one otherwise. 

Since the search space is exponential in the number of variables in the scope of the product node, we greedily split the scope into mutually independent subsets according to pairwise independence tests applied recursively similar to \citep{gens2013learning}. %If more than one subset is found, we recursively partition each subset and utilize their union as the partition. 
In case no independent subsets are found, we sample a partition at random when the number of variables is greater than some threshold and select the next partition according to the lexicographic ordering otherwise.  Alg.~\ref{alg:get-partition} describes the process of finding the next partition based on which we construct a product of naive Bayes models (Fig.~\ref{fig:to-random-part-b}) where each naive Bayes model has two children that encode factored distributions.  This may increase or decrease the size of the template network depending on whether the new product of naive Bayes models replaces a bigger or smaller sub-SPN at the sampled product node.
% Alg.~\ref{alg:neighbour} describes this process where we repeatedly seek to improve a product node by replacing its partition with another one. On finding an improvement we continue to search for better partitions in the local neighbourhood by enumerating the partitions according to a lexicographic ordering. %Fig.~\ref{fig:dspns} shows an example where the sub-SPN of a product node is replaced by a new sub-SPN corresponding to a product of naive Bayes models.  

Since constructing the new template, learning its parameters, and computing its score can be done in a time that is linear in the size of the template network and the dataset, each iteration of the anytime search-and-score algorithm scales linearly with the size of the template network and the amount of data.
\begin{algorithm}[!ht]
\caption{Generate neighbour (improved template network)}
\label{alg:neighbour}
\begin{algorithmic} 
\REQUIRE $trainSet, validationSet, templNet$ 
\ENSURE $templNet$
\REPEAT
	\STATE $n \leftarrow$ sample product node uniformly from $templNet$
	\STATE $newPartition \leftarrow \mathrm{GetPartition(n)}$
	\STATE $n' \leftarrow$ construct product of naiveBayes models based on $newPartition$
	\STATE $newTempl \leftarrow$ replace $n$ by $n'$ in $templNet$
\UNTIL $likelihood(newTempl,validationSet) <likelihood(templNet,validationSet)$
\end{algorithmic}
\end{algorithm}

\begin{algorithm}[!ht]
\caption{GetPartition}
\label{alg:get-partition}
\begin{algorithmic} 
\REQUIRE product node $n$ 
\ENSURE $nextPartition$
\STATE \textbf{if} $|\mathrm{scope(n)}| > \mathrm{threshold}$ \textbf{then}
	\STATE $\quad \{s_1, ..., s_k\} \gets$ partition $scope(n)$ into indep. subsets 
	\STATE $\quad$ \textbf{if} $k > 1$ \textbf{then return} $\cup_{i=1}^k GetPartition(s_i)$ 	
    \STATE $\quad$ \textbf{else return} random partition of $scope(n)$
\STATE \textbf{else return} next lexicographic partition of $scope(n)$ according to the RGS encoding 
\end{algorithmic}
\end{algorithm}

%The following theorem shows that the network templates produced by our local search algorithm are invariant. 

\begin{theorem}
The network templates produced by Alg.~\ref{alg:initial-structure} and~\ref{alg:neighbour} are invariant. 
\end{theorem}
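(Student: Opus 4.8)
The plan is to prove the claim by induction over the trajectory of the search-and-score procedure: the template returned by Alg.~\ref{alg:initial-structure} is the base case, and one execution of the replacement in Alg.~\ref{alg:neighbour} is the inductive step. Concretely, I would first verify directly that the initial structure satisfies the scope-exclusion requirement and all five numbered conditions of Def.~\ref{def:invariance}, and then show that replacing the sub-SPN rooted at a sampled product node $n$ by a product of naive Bayes models over a partition of $\mathrm{scope}(n)$ carries an invariant template to an invariant template. Since the search only ever starts from Alg.~\ref{alg:initial-structure} and thereafter applies Alg.~\ref{alg:neighbour}, this chain establishes invariance for every template produced.

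For the base case I would assign to every input interface node the common scope $S$ inherited from the output interface of the network below it; all input interface nodes feed a single sum node (the ``distribution over the interface input nodes''), so they must share a scope for that sum node to be complete. With this assignment, each univariate-distribution sum node has scope $\{X_i^t\}$ and is complete, the interface-distribution sum node has children all of scope $S$ and is complete, and each output product node factorizes the pairwise-disjoint scopes $\{X_1^t\},\ldots,\{X_n^t\},S$ and is therefore decomposable; this gives conditions (4) and (5). For the interface conditions (1)--(3), note that $S$ excludes $\{X_1^t,\ldots,X_n^t\}$, that every pair of input interface nodes lies in the ``equal scope'' branch, and that every output product node has the identical scope $\{X_1^t,\ldots,X_n^t\}\cup S$, so both sides of the two biconditionals are uniformly in the equal case and hold.

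For the inductive step, suppose the current template is invariant and Alg.~\ref{alg:neighbour} replaces the sub-SPN rooted at an interior or output product node $n$ (the input interface nodes are leaves of the template and are never sampled) by $n'$, the product of naive Bayes models built from a partition of $\mathrm{scope}(n)$. I would establish two facts: (i) $\mathrm{scope}(n')=\mathrm{scope}(n)$, which holds because the partition covers exactly $\mathrm{scope}(n)$; and (ii) $n'$ is internally complete and decomposable, since its root product node splits the scope into the disjoint blocks of the partition (decomposable), each naive Bayes sum node ranges over factored distributions of identical scope (complete), and each factored distribution is a product of disjoint univariate pieces (decomposable). Because of (i), the scope of every ancestor of $n$---in particular every output interface node---is unchanged and no other node is touched; hence the scopes of all interface nodes are exactly as before, so the scope-exclusion requirement and conditions (1)--(3) are inherited verbatim. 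Conditions (4)--(5) also carry over: nodes inside $n'$ are handled by (ii), and every surviving node outside the replaced sub-SPN keeps both its children and their scopes, so its completeness or decomposability is unaffected.

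The part needing the most care is the interface bookkeeping. I must argue that the interface is never broken apart by a partition: because all input interface nodes share the single scope $S$ and are grouped under one interface-distribution sum node, $\mathrm{GetPartition}$ (Alg.~\ref{alg:get-partition}) treats the whole interface as one atomic element of the ground set being partitioned, so no block ever contains a proper nonempty subset of $S$. This is exactly what lets each block touching the interface be realized by the interface-distribution sum node (keeping it complete), and it guarantees $\mathrm{scope}(n')=\mathrm{scope}(n)$ with all output scopes remaining mutually equal. A secondary point is to rule out the degenerate case $S=\emptyset$, which would break the disjointness biconditional (3) by making the inputs vacuously disjoint while the outputs, which always contain $\{X_1^t,\ldots,X_n^t\}$, are not. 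This is excluded by appealing to condition (c) of Theorem~\ref{thm:DSPN_valid}: the input interface nodes are assigned the nonempty scopes of the output interface of the network below, so $S$ is always nonempty and disjoint from the current slice.
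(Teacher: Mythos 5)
Your proof is correct and takes essentially the same route as the paper's: the base case shows the initial template of Alg.~\ref{alg:initial-structure} is invariant when all input interface nodes are assigned a common scope, and the inductive step shows each application of Alg.~\ref{alg:neighbour} preserves invariance because the replacement sub-SPN has the same scope as the product node it replaces, so all interface scopes are untouched. Your write-up is in fact more careful than the paper's terse argument, which leaves implicit the points you make explicit --- the internal completeness and decomposability of the new product of naive Bayes models, the atomicity of the interface scope under partitioning, and the exclusion of the degenerate empty-scope case.
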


%\begin{proof}
%See the supplementary material for the proof
%\end{proof}

\begin{proof}
Let the scope of all input interface nodes be identical.  The initial structure of the template network is a collection of factored distributions over all the variables.  Hence the output interface nodes all have the same scope (which includes all the variables).  Hence, Alg.~\ref{alg:initial-structure} produces an initial template network that is invariant. Alg.~\ref{alg:neighbour} replaces the sub-SPN of a product node by a new sub-SPN, which does not change the scope of the product node. This follows from the fact that the new partition used to construct the new sub-SPN has the same variables as the original partition.  Since the scope of the product node under which we change the sub-SPN does not change, all nodes above that product node, including the output interface nodes, preserve their scope. Hence Alg.~\ref{alg:neighbour} produces neighbour template networks that are invariant.
\end{proof}

%\begin{figure}[t]
%\centering
%	\subfigure[]{
%		\begin{minipage}[t]{0.45\columnwidth}
%			\centering
%			\raisebox{-0.5\height}{
				%\includegraphics[width=0.75\textwidth]{}		
%			}
%		\end{minipage}
%		\label{fig:to-random-part-a}
%	}
%	~
%	\subfigure[]{
%		\begin{minipage}[t]{0.45\columnwidth}
%			\centering		
%			\raisebox{-0.5\height}{
				%\includegraphics[width=0.55\textwidth]{}
%			}
%		\end{minipage}
%		\label{fig:to-random-part-b}
%	}
%\caption{The SPN of the root product node in (a) is replaced by a product of naive Bayes models in (b).}
%\label{fig:dspns}
%\end{figure}

%-------------------------------------------------------------
\section{Experiments}
\label{sec:experiments}
%-------------------------------------------------------------

We evaluate the performance of our anytime search-and-score method for DSPNs on several synthetic and real-world {\em sequence} datasets. In addition, we measure how well the DSPNs model the data by comparing the negative log-likelihoods with those of static SPNs learned using LearnSPN~\citep{gens2013learning}, and with other dynamic models such as Hidden Markov Models (HMM), DBNs and recurrent neural networks (RNNs). The threshold in Alg.~\ref{alg:get-partition} was set to 6 in all experiments.

The synthetic datasets include three dynamic processes with different structures: sequences of observations sampled from $(i)$ an HMM with one hidden variable, $(ii)$ the well-known Water DBN~\citep{jensen1989forprojekt} and $(iii)$ the Bayesian automated taxi (BAT) DBN~\citep{forbes1995batmobile}.  We also evaluate DSPNs with 5 real-world sequence datasets from the UCI repository~\citep{Lichman:2013}. They include applications such as online handwriting recognition~\citep{alimoglu1996methods} and speech recognition~\citep{hammami2009tree,kudo1999multidimensional}.  

%% In order to report the performance of the DSPN, each dataset was randomly partitioned into training, validation and testing subsets, with 10\% of the dataset utilized for validation, 10\% for testing, and the rest reserved for training. 
%The validation partition is used as part of the structure learning procedure in our algorithm to pick the best model, while the testing partition is used at the very end to evaluate the performance of the model. 

We first compare DSPNs to the true model on the synthetic datasets.  As LearnSPN cannot be used with data of variable length, we include it in the synthetic datasets experiment only, where we sample sequences of fixed length.  Table~\ref{table:synthetic-data} shows the negative log-likelihoods based on 10-fold cross validation for the synthetic datasets. In all three synthetic datasets, DSPN learned generative models that exhibited likelihoods that are close to that of the true models. It also outperforms LearnSPN in all three cases. 

Next, we compare DSPNs to classic HMMs with parameters learned by Baum-Welch~\citep{baum1970maximization}, HMM-SPNs where each observation distribution is an SPN ~\citep{peharz2014modeling}, fully observable DBNs whose structure is learned by the Reveal algorithm~\citep{liang1998reveal} from the BayesNet Toolbox~\citep{Murphy01thebayes}, partially observable DBNs, whose structure and hidden variables are learned by search and score~\citep{friedman1998learning}, and RNNs with one input node, one hidden layer consisting of long short term memory (LSTM) units~\citep{hochreiter1997} and one output sigmoid unit with a cross-entropy loss function. We select LSTM units due to their popularity and success in sequence learning~\citep{SutskeverVL2014}. The input node corresponds to the value of the current observation and the output node to the predicted value of the next observation in the sequence. We train the network by backpropagation through time (bptt) truncated to 20 time steps~\citep{williams1990} with a learning rate of 0.01. Our implementation is based on the Theano library~\citep{theano} in Python.

\begin{table}[!t]
\centering
%\resizebox{\columnwidth}{!}{%
\begin{tabular}{|l|c|c|c|}
\hline
Dataset       		& HMM-Samples  &  Water	&  BAT	\\ 
(\#i, length, \#oVars) & (100, 100, 1)		& (100, 100, 4)		& (100, 100, 10) \\
\hline
True model 	& 62.2 $\pm$ 0.8 		& 249.6 $\pm$ 1.0 		& 628.2 $\pm$ 2.0	\\ 
\hline
LearnSPN	& 65.4 $\pm$ 0.7		& 270.4 $\pm$ 0.9		& 684.4 $\pm$ 1.3		\\ 
DSPN   		& {\bf 62.5} $\pm$ 0.7	& {\bf 252.4} $\pm$ 0.9	& {\bf 641.6} $\pm$ 1.1		\\ \hline
\end{tabular}
%}
\caption{Mean negative log-likelihood and standard error based on 10-fold cross validation for the synthetic datasets. (\#i,length,\#oVars) indicates the number of data instances, length of each sequence and number of observed variables. Lower likelihoods are better.}
\label{table:synthetic-data}
\end{table}

\begingroup
\setlength{\tabcolsep}{3pt} % Default value: 6pt
\begin{table*}[]
\centering

\begin{tabular}{|l|c|c|c|c|c|}

\hline
Dataset & ozLevel & PenDigits & ArabicDigits & JapanVowels &  ViconPhysic \\
(\#i,length,\#oVars) & (2533,24,2)		& (10992,16,7)		& (8800,40,13)  & (640,16,12) & (200,3026,27) \\
\hline
HMM     			& 56.7 $\pm$ 1.1		& 74.2 $\pm$ 0.1    	& 327.5 $\pm$ 0.4    	& 94.3 $\pm$ 0.3 		& 40862 $\pm$ 369 \\
HMM-SPN    	    & 49.8 $\pm$ 0.9		& 67.7 $\pm$ 0.6    	& 305.8 $\pm$ 1.8    	& 89.8 $\pm$ 1.2    	& 38410 $\pm$ 440 \\
RNN        	    & {\bf 16.2} $\pm$ 0.7	& 68.7 $\pm$ 1.3    	& 303.6 $\pm$ 6.4    	& 78.8 $\pm$ 2.3    	& 
57217 $\pm$ 873 \\
Search-Score DBN  & 40.2 $\pm$ 4.7		& 67.3 $\pm$ 2.3      	& 263.7 $\pm$ 4.6      	& 75.6 $\pm$ 2.5        & - \\
Reveal DBN        & 52.4 $\pm$ 2.5		& 74.4 $\pm$ 0.2    	& 260.2 $\pm$ 1.0    	& 71.3 $\pm$ 1.2    	& - \\
DSPN    			& 33.0 $\pm$ 1.0		& {\bf 63.5} $\pm$ 0.3	& {\bf 257.9} $\pm$ 0.5 & {\bf 68.8} $\pm$ 0.3  & {\bf 36385} $\pm$ 682 \\
\hline    
\end{tabular}
\caption{Mean negative log-likelihood and standard error based on 10-fold cross validation for the real world datasets. (\#i,length,\#oVars) indicates the number of data instances, average length of the sequences and number of observed variables.%Log-likelihoods in bold are the highest among training and testing folds. The small standard errors indicate that DSPN's improvement over HMMs and DBNs is statistically significant.
}
\label{table:exp-results}
\end{table*}
\endgroup
\begin{table}[b]
\centering
\resizebox{\columnwidth}{!}{
\begin{tabular}{|l|r|r|r|r|r|r|r|r|}
\hline
\multirow{3}{*}{Dataset} & \multicolumn{4}{c|}{Learning Time (Seconds)}                                                                                      & \multicolumn{4}{c|}{Inference Time (Seconds)}                                                                                                                                        \\ \cline{2-9} 
                         & \multicolumn{1}{c|}{\multirow{2}{*}{Reveal}} & \multicolumn{3}{c|}{Per Iteration}                                                 & \multicolumn{1}{c|}{\multirow{2}{*}{Reveal}} & \multicolumn{1}{l|}{\multirow{2}{*}{RNN}} & \multicolumn{1}{l|}{\multirow{2}{*}{SS DBN}} & \multicolumn{1}{l|}{\multirow{2}{*}{DSPN}} \\ \cline{3-5}
                         & \multicolumn{1}{c|}{}                        & \multicolumn{1}{l|}{RNN} & \multicolumn{1}{l|}{SS DBN} & \multicolumn{1}{l|}{DSPN} & \multicolumn{1}{c|}{}                        & \multicolumn{1}{l|}{}                     & \multicolumn{1}{l|}{}                        & \multicolumn{1}{l|}{}                      \\ \hline
ozLevel                  
	& 952                                          
    & 56                       
    & 108                         
    & 54                        
    & 6.3                                       
    & 0.1                                    
    & 15.6                                       
    & 0.1 
    \\
PenDigits                
	& 3,977                                         
    & 558                      
    & 1,463                        
    & 475                       
    & 15.0                                        
    & 0.2                                    
    & 30.7                                        
    & 0.1                                      
    \\
ArabicDigits             
	& 16,549                                        
    & 2572                
    & 14,911                       
    & 2,909                      
    & 53.6
    & 2.5                                     
    & 465.8                                        
    & 2.9                                       
    \\
JapaneseVowls            
	& 516                                          
    & 55                      
    & 363                         
    & 51                        
    & 15.2                                        
    & 0.2                                     
    & 69.2                                        
    & 0.5
    \\
ViconPhysical   
	& -                                            
    & 4705                        
    & -                           
    & 6734                        
    & -                                            
    & 2274                                       
    & -                                            
    & 1825
    \\ \hline
\end{tabular}
}

\caption{Comparisons of the learning and inference times of the networks learned by Reveal, RNN, Search-Score DBN (SS DBN) and DSPN.}
\label{time-table}

\end{table}

Table~\ref{table:exp-results} shows the results for the real datasets.  DSPNs outperform the other approaches except for one dataset where the RNN achieved better results.  DSPNs are more expressive than classic HMMs and HMM-SPNs since our search and score algorithm has the flexibility of learning a suitable structure with multiple interface nodes for the transition dynamics where as the structure of the transition dynamics is fixed with a single hidden variable in classic HMMs and HMM-SPNs.  DSPNs are also more expressive than the fully observable DBNs found by Reveal since the sum nodes in the template networks implicitly denote hidden variables.  DSPNs are as expressive as the partially observable DBNs found by search and score, but better results are achieved by DSPNs because their linear inference complexity allows us to explore a larger space of structures more quickly. DSPNs are less expressive than RNNs since DSPNs are restricted to sum and product nodes while RNNs use sum, product, max and sigmoid operators.  Nevertheless, RNNs are notoriously difficult to train due to the non-convexity of their loss function and vanishing/exploding gradient issues that arise in backpropagation through time.  This explains why RNNs did not outperform DSPNs on 4 of the 5 datasets.

Table~\ref{time-table} shows the time to learn and do inference with the DBN, RNN and DSPN models (the HMM models are omitted since they do not learn any structure for the transition dynamics and therefore are not as expressive).  All models were trained till convergence or up to two days. We report the total time for learning with Reveal and the time per iteration for learning with the other algorithms since they are anytime algorithms.  Learning DSPNs is generally faster than training RNNs and search-and-score DBNs. The time to do inference for all the sequences in each dataset when one variable is observed and the other variables are hidden is reported in the right hand side of the table.  DSPNs and RNNs are fast since they allow exact inference in linear time with respect to the size of their network, while the DBNs obtained by Reveal and search-and-score are slow because  inference may be exponential in the number of hidden variables if they all become correlated.

\section{Conclusion}
\label{sec:conclusion}
%----------------------------------------------------------

Existing methods for learning SPNs become inadequate when the task involves modeling sequence data such as time series data points. The specific challenge is that sequence data could be composed of instances of different lengths. Motivated by dynamic Bayesian networks, we presented a new model called dynamic SPN, which utilized a template network as a building block. We also defined a notion of invariance and showed that invariant template networks can be composed safely to ensure that the resulting DSPN is valid.  We provided an anytime algorithm based on the framework of search-and-score for learning the structure of the template network from data. As our experiments demonstrated, a DSPN fits sequential data better than static SPNs (produced by LearnSPN).  We also showed that the DSPNs found by our search-and-score algorithm achieve higher likelihood than competing HMMs, DBNs and RNNs on several temporal datasets. While approximate inference is typically used in DBNs to avoid an exponential blow up, inference can be done exactly in linear time with DSPNs.  
% For future work, we plan to explore the use of DSPNs as an alternative to factorial HMMs~\citep{ghahramani1997factorial}, coupled HMMs~\citep{brand1997coupled} and hierarchical HMMs~\citep{fine1998hierarchical}, which also require approximate inference to ensure tractability.

%\newpage
%\appendix
%\section*{Appendix A.}
%\label{app:theorem}

\appendix

\acks{This research was funded by a grant from Huawei Noah's Ark Lab in Hong Kong.}

\section{Proof of Theorem~\ref{thm:DSPN_valid}}
\label{appendix}

This supplementary material provides a detailed proof of Theorem~\ref{thm:DSPN_valid}.  We first introduce three lemmas and one corollary that are necessary to prove Theorem~\ref{thm:DSPN_valid}. 

Lemma~\ref{lemma:scope-union} shows that the scope of any node is the union of the input nodes of the subnetwork rooted at that node.  This will be useful in Lemma~\ref{lemma:relations} to show how the scope of different nodes relate to each other. 

\begin{lemma} [Scope Union] The scope of a node $i$ is the union of the scopes of the input nodes of the subnetwork rooted at $i$: 
\begin{equation}
scope(i) = \cup_{k \in inputs(i)} \; scope(k)
\label{eq:scope-union} 
\end{equation}
\label{lemma:scope-union} 
\end{lemma}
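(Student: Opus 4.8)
The plan is to prove the identity by induction on the structure of the subnetwork rooted at $i$, ordering the induction by the length of the longest directed path from $i$ down to a leaf. Here the \emph{input nodes} of the subnetwork rooted at $i$ are precisely the leaves reachable from $i$, namely the indicator variables $I_{x_j^t}, I_{\bar{x}_j^t}$ and the interface input nodes; each such leaf carries a prescribed scope (a singleton for an indicator, and the assigned scope for an interface node). The goal is to show that the scope accumulated at $i$ by the usual bottom-up definition coincides with the union of these leaf scopes.

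For the base case, suppose $i$ is itself an input node, so its longest downward path has length $0$. Then $inputs(i) = \{i\}$ and the claim reduces to $scope(i) = scope(i)$, which holds trivially.

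For the inductive step, let $i$ be an interior, interface, or root node, hence a sum or product node with children $Ch(i)$. The first fact I would establish is that scope is additive over children: since $i$ introduces no variable of its own, the variables appearing in the subnetwork rooted at $i$ are exactly those appearing in the subnetworks rooted at its children, so $scope(i) = \cup_{c \in Ch(i)} scope(c)$. Each child $c$ has strictly smaller longest-path height, so the induction hypothesis yields $scope(c) = \cup_{k \in inputs(c)} scope(k)$. The second fact I need is that the inputs decompose through the children, $inputs(i) = \cup_{c \in Ch(i)} inputs(c)$, which holds because every leaf reachable from $i$ is reachable via some child of $i$ and, conversely, every leaf reachable from a child is reachable from $i$. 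Combining these gives $scope(i) = \cup_{c \in Ch(i)} scope(c) = \cup_{c \in Ch(i)} \cup_{k \in inputs(c)} scope(k) = \cup_{k \in inputs(i)} scope(k)$, completing the induction.

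I expect the only delicate point to be that the template is a directed acyclic graph rather than a tree, so a single input node may be reachable from $i$ along several distinct paths. This causes no actual difficulty because the union operation is idempotent, so each reachable input contributes its scope exactly once to the right-hand side regardless of multiplicity; phrasing the induction in terms of longest-path height (rather than a tree-style recursion) keeps the argument well-founded in the presence of shared substructure.
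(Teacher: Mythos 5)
Your proof is correct and follows essentially the same route as the paper's: induction on a structural height measure, with the base case being input nodes and the inductive step decomposing $scope(i)$ as the union over children, applying the induction hypothesis, and recombining via $inputs(i) = \cup_{c \in Ch(i)} inputs(c)$. You are in fact slightly more careful than the paper, making the inputs-decomposition step explicit and noting that shared substructure in the DAG is harmless by idempotence of union, but these are refinements of the same argument rather than a different approach.
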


\begin{proof}
We give a proof by induction based on the level of each node.  For the base case, consider input leaf nodes (level 1).  Since an input node only has itself as input, it satisfies Eq.~\ref{eq:scope-union}.  For the induction step, assume that all nodes up to level $l$ satisfy Eq.~\ref{eq:scope-union}.  Since the scope of a node at level $l+1$ is the union of the scopes of its children at lower levels, then  
\begin{align}
scope(i) & = \cup_{child \in children(i)} \; scope(child) \\
&  = \cup_{child \in children(i)} \; [ \cup_{k \in inputs(child)} \; scope(k) ] \\
& = \cup_{k \in inputs(i)} \; scope(k)
\end{align}
\end{proof}

When the scopes of the input nodes of a network are either identical or disjoint then Lemma~\ref{lemma:relations} shows that changing the scopes of the input nodes in a way that preserves their identity and disjoint relations ensures that the identity and disjoint relations are also preserved for any pair of nodes in the network.  This will be useful in Corollary~\ref{corollary:decomposability-completeness} to show that completeness and decomposability are also preserved.

\begin{lemma} [Preservation of scope identity and disjoint relations] Let $g$ be a scope relabeling function that applies only to the scope of the input nodes.  If for all pairs $i$, $j$ of input nodes the following properties hold
\begin{align}
& scope(i) = scope(j) \vee scope(i) \cap scope(j) = \emptyset \label{eq:scope-id-disjoint} \\
& scope(i) = scope(j) \rightarrow g(scope(i)) = g(scope(j)) \label{eq:same-scope} \\
& scope(i) \cap scope(j) = \emptyset \rightarrow g(scope(j)) \cap g(scope(j)) = \emptyset  \label{eq:disjoint-scope}
\end{align}
then for all pairs $i$, $j$ of nodes the following properties hold
\begin{align}
& scope(i) = scope(j) \rightarrow scope_g(i) = scope_g(j) \label{eq:same-scope2} \\
& scope(i) \cap scope(j) = \emptyset \rightarrow scope_g(i) \cap scope_g(j) = \emptyset \label{eq:disjoint-scope2} 
\end{align}
Here $scope_g(i) = \cup_{j\in inputs(i)} \; g(scope(j))$ where $inputs(i)$ is the set of input nodes for the subnetwork rooted at $i$.
\label{lemma:relations}
\end{lemma}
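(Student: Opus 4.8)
The plan is to reduce the claim about an arbitrary pair of nodes to the hypotheses, which only constrain input nodes, by invoking Lemma~\ref{lemma:scope-union}. That lemma lets me write $scope(i) = \cup_{k \in inputs(i)} scope(k)$, while the definition already expresses $scope_g(i) = \cup_{k \in inputs(i)} g(scope(k))$ over the same index set. So both quantities attached to a node are completely determined by the scopes of its input nodes, and I only need to understand how $g$ interacts with those input scopes.

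The key structural observation is that Eq.~\ref{eq:scope-id-disjoint} forces the distinct scopes occurring among all input nodes to be pairwise disjoint: any two are either equal or disjoint, so after discarding duplicates I am left with disjoint ``atoms'' $S_1,\ldots,S_m$. Every node scope is then a union of some subcollection of these atoms, and (assuming input scopes are nonempty, which holds since each is either a single variable or an assigned interface scope) a union of disjoint atoms determines exactly which atoms it contains. Meanwhile Eq.~\ref{eq:same-scope} guarantees that $g$ is well defined as a function of the scope value $S_r$, so it does not matter which input node carried that scope, and Eq.~\ref{eq:disjoint-scope} guarantees that the images $g(S_1),\ldots,g(S_m)$ are again pairwise disjoint.

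With this in hand, both implications follow by an index-set argument. For Eq.~\ref{eq:same-scope2}, if $scope(i)=scope(j)$ then $i$ and $j$ aggregate exactly the same set of atoms; applying $g$ atom by atom and taking the union yields $scope_g(i)=scope_g(j)$. For Eq.~\ref{eq:disjoint-scope2}, if $scope(i)\cap scope(j)=\emptyset$ then the atom index sets of $i$ and $j$ are disjoint, and since the $g$-images of distinct atoms are themselves disjoint, $scope_g(i)$ and $scope_g(j)$ share no element.

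I expect the main obstacle to be the bookkeeping that justifies ``a union of disjoint atoms determines exactly which atoms it contains,'' since this is what lets scope equality or disjointness of arbitrary nodes be read off at the level of atom index sets. This step relies on the atoms being both disjoint (from Eq.~\ref{eq:scope-id-disjoint}) and nonempty, so I would state nonemptiness of input scopes explicitly and note that the degenerate empty-scope case causes no trouble, since empty sets contribute nothing to either a union or an intersection.
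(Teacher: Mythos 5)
Your proposal is correct and follows essentially the same route as the paper: both reduce a node's scope to the union of its input scopes via Lemma~\ref{lemma:scope-union}, exploit the identical-or-disjoint hypothesis to set up a correspondence between the inputs of the two nodes, and then push $g$ through the union piecewise; your pairwise-disjoint ``atoms'' are exactly the scope-equivalence classes that the paper encodes with its mapping $h$. The one point where you go beyond the paper is in flagging nonemptiness of input scopes, which the paper's coverage claim (Eq.~\ref{eq:h-coverage}) tacitly assumes; note only that your dismissal of the degenerate empty-scope case also needs $g(\emptyset)=\emptyset$, which follows from Eq.~\ref{eq:disjoint-scope} applied with $i=j$.
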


\begin{proof}
Proof of Eq.~\ref{eq:same-scope2}: Suppose $scope(i) = scope(j)$ then 
\begin{equation}
\cup_{k \in inputs(i)} \; scope(k) = \cup_{l \in inputs(j)} \; scope(l) \mbox{  (by Lemma~\ref{lemma:scope-union})} 
\label{eq:equiv-inputs}
\end{equation}
Since the scope of each pair of inputs is either identical or disjoint (by Eq.~\ref{eq:scope-id-disjoint}), there exists a function $h$ that maps each input of $i$ to the set of inputs of $j$ with the same scope:
\begin{equation}
h(k) = \{l|scope(l)=scope(k), l \in inputs(j)\} \forall k \in inputs(i)
\label{eq:h-definition}
\end{equation}
Furthermore this function covers the inputs of $j$:
\begin{equation}
\cup_{k \in inputs(i)} \; h(k) = inputs(j)
\label{eq:h-coverage}
\end{equation}
We can then show that
\begin{align}
scope_g(i) & = \cup_{k \in inputs(i)} \; g(scope(k)) \\
& = \cup_{k \in inputs(i)} \; g(\cup_{l \in h(k)} \; scope(l)) \mbox{  (by Eq.~\ref{eq:h-definition})}  \\
& = \cup_{k \in inputs(i)} \; \cup_{l \in h(k)} \; g(scope(l)) \mbox{  (by Eq.~\ref{eq:same-scope})}  \\
& = \cup_{l \in inputs(j)} \; g(scope(l)) \mbox{  (by Eq.~\ref{eq:h-coverage})}\\
& = scope_g(j)
\end{align}
Proof of Eq.~\ref{eq:disjoint-scope2}: Suppose $scope(i) \cap scope(j) = \emptyset$ then
\begin{equation}
scope(k) \cap scope(l) = \emptyset \; \forall k \in inputs(i), l \in inputs(j) 
\label{eq:disjoint-inputs}
\end{equation}
We can then show that
\begin{align}
& scope_g(i) \cap scope_g(j) \\
& = (\cup_{k \in inputs(i)} \; g(scope(k))) \cap (\cup_{l \in inputs(j)} \; g(scope(l))) \\
& = \cup_{k \in inputs(i), l \in inputs(j)} \; g(scope(k)) \cap g(scope(l)) \\
& = \cup_{k \in inputs(i), l \in inputs(j)} \; \emptyset \mbox{  (by Eq.~\ref{eq:disjoint-inputs} and~\ref{eq:disjoint-scope})} \\
& = \emptyset
\end{align}
\end{proof}

When the scopes of the input nodes of a network are either identical or disjoint then Corollary~\ref{corollary:decomposability-completeness} shows that changing the scopes of the input nodes in a way that preserves their identity and disjoint relations ensures completeness and decomposability is preserved throughout the network.  This will be useful in Lemma~\ref{lemma:invariance} to show that composing multiple template networks preserves their invariance.

\begin{corollary} [Preservation of completeness and decomposability] 
Let $g$ be a scope relabeling function that applies only to the input nodes.  If for all pairs $i$, $j$ of input nodes the following properties hold
\begin{itemize}
\item $scope(i) = scope(j) \vee scope(i) \cap scope(j) = \emptyset$
\item $scope(i) = scope(j) \rightarrow g(scope(i)) = g(scope(j))$
\item $scope(i) \cap scope(j) = \emptyset \rightarrow g(scope(j)) \cap g(scope(j)) = \emptyset$
\end{itemize}
then decomposability and completeness are preserved.
\label{corollary:decomposability-completeness}  
\end{corollary}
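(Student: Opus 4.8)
The plan is to derive this corollary almost directly from Lemma~\ref{lemma:relations}, since the three bulleted hypotheses of the corollary are precisely the hypotheses (Eq.~\ref{eq:scope-id-disjoint}--\ref{eq:disjoint-scope}) of that lemma. First I would recall the relevant definitions: a sum node is complete when all of its children share the same scope, and a product node is decomposable when all of its children have pairwise disjoint scopes. Interpreting ``preserved'' to mean that a network which is complete and decomposable under the original scopes $scope(\cdot)$ remains so under the relabeled scopes $scope_g(\cdot)$, the task reduces to checking the completeness condition at every sum node and the decomposability condition at every product node after relabeling.

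Next I would invoke Lemma~\ref{lemma:relations}. Because its hypotheses coincide with those of the corollary, its conclusions (Eq.~\ref{eq:same-scope2} and Eq.~\ref{eq:disjoint-scope2}) hold for \emph{every} pair of nodes $i,j$ in the network, not merely the input nodes: whenever $scope(i)=scope(j)$ we have $scope_g(i)=scope_g(j)$, and whenever $scope(i)\cap scope(j)=\emptyset$ we have $scope_g(i)\cap scope_g(j)=\emptyset$. These two implications are exactly what is needed to transport completeness and decomposability across the relabeling.

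I would then finish with a node-by-node argument. Fix any sum node and let $i,j$ be two of its children; by completeness under the original scopes, $scope(i)=scope(j)$, so Eq.~\ref{eq:same-scope2} yields $scope_g(i)=scope_g(j)$, and since this holds for every pair of children, the sum node is complete under $scope_g$. Symmetrically, fix any product node with children $i,j$; decomposability under the original scopes gives $scope(i)\cap scope(j)=\emptyset$, so Eq.~\ref{eq:disjoint-scope2} yields $scope_g(i)\cap scope_g(j)=\emptyset$, making the product node decomposable under $scope_g$. As every sum node stays complete and every product node stays decomposable, both properties are preserved.

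There is essentially no hard step here: the corollary is a repackaging of Lemma~\ref{lemma:relations}, and the only point requiring care is conceptual rather than technical---recognizing that completeness and decomposability are stated purely in terms of scope equality and scope disjointness among children, so that a result preserving those two relations across the relabeling automatically preserves the two structural properties. The one thing I would double-check is that the relabeled scope $scope_g(i)=\cup_{k\in inputs(i)} g(scope(k))$ is indeed the scope one obtains by applying $g$ at the leaves and propagating it upward through the network, which Lemma~\ref{lemma:scope-union} guarantees.
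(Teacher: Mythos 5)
Your proof is correct and follows essentially the same route as the paper: both invoke Lemma~\ref{lemma:relations} to conclude that scope identity and disjointness among arbitrary pairs of nodes survive the relabeling, and then apply this to the children of each sum node (completeness) and each product node (decomposability). Your version merely spells out the node-by-node check and the consistency of $scope_g$ with upward propagation (via Lemma~\ref{lemma:scope-union}) in more detail than the paper does.
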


\begin{proof}
According to Lemma~\ref{lemma:relations}, all pairs of nodes that have the same scope still have the same scope after relabeling the scopes with $g$.  Hence complete sum nodes (i.e., children all have the same scope) are still complete after relabeling the scopes with $g$.  Similary, according to Lemma~\ref{lemma:relations}, all pairs of nodes that have disjoint scopes still have the disjoint scopes after relabeling the scopes with $g$.  Hence decomposable product nodes (i.e., children have disjoint scopes) are still decomposable after relabeling the scopes with $g$. 
\end{proof}

When a template network is invariant, Lemma~\ref{lemma:invariance} shows that composing any number of template networks preserves invariance.  This result is the key to proving Theorem~\ref{thm:DSPN_valid}. 

\begin{lemma}
If a template network is invariant then a stack of arbitrarily many copies of this template network is also invariant.
\label{lemma:invariance}
\end{lemma}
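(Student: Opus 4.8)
The plan is to argue by induction on the number $m$ of template copies in the stack. The base case $m=1$ is immediate, since a single copy is invariant by hypothesis. For the inductive step I would assume the stack $S_m$ of $m$ copies is invariant and form $S_{m+1}$ by taking a fresh copy $C$ of the template and merging its $k$ input interface nodes with the $k$ output interface nodes of $S_m$, identifying corresponding nodes through the interface bijection $f$. The goal is then to verify each clause of Def.~\ref{def:invariance} for $S_{m+1}$, whose input interface nodes are those of $S_m$ and whose output interface nodes are those of $C$.

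The central idea is that merging $C$ onto $S_m$ is exactly a scope relabeling of $C$'s input interface nodes: each such node inherits, in place of its abstract template scope, the concrete scope that the corresponding output node of $S_m$ carries (which, by Lemma~\ref{lemma:scope-union}, is the union of the scopes of the genuine input leaves of $S_m$ reachable through it). I would first show that this relabeling $g$ satisfies the hypotheses of Corollary~\ref{corollary:decomposability-completeness}: the input interface nodes of $C$ are pairwise identical-or-disjoint by clause~1 of the template's invariance, and because both $S_m$ and $C$ are instances of the \emph{same} template, the identity/disjoint pattern borne by $S_m$'s output interface nodes coincides, via $f$ and clauses~2 and~3 applied to $S_m$, with the pattern required on $C$'s input interface nodes. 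Hence equal abstract scopes merge onto equal concrete scopes and disjoint abstract scopes merge onto disjoint concrete scopes, which are precisely the conditions demanded by the Corollary and by Lemma~\ref{lemma:relations}.

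With $g$ in hand I would then invoke Corollary~\ref{corollary:decomposability-completeness} to conclude that every interior and output sum node of $C$ stays complete and every interior and output product node stays decomposable once the concrete scopes are substituted, establishing clauses~4 and~5 for $S_{m+1}$ (the nodes inside $S_m$ already satisfy these by the inductive hypothesis). Lemma~\ref{lemma:relations} then propagates the identity and disjoint relations from $C$'s input interface nodes up to its output interface nodes; composing the per-copy bijections into a single bijection $F$ for $S_{m+1}$ and chaining this propagation with the inductive hypothesis on $S_m$ yields clauses~2 and~3 for the composite. Clause~1 is inherited directly, since the input interface nodes of $S_{m+1}$ are exactly those of $S_m$ and their scopes are unchanged.

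The step I expect to be the main obstacle is making the merge's pattern-alignment fully rigorous, and in particular recovering the \emph{biconditionals} of clauses~2 and~3 rather than the one-directional implications that Lemma~\ref{lemma:relations} supplies. This requires showing that distinct, hence disjoint, abstract input classes of $C$ map under $g$ to distinct concrete scopes; I would secure this by noting that, because the stack contains genuine variable leaves, the output scopes of $S_m$ are nonempty, so disjoint concrete scopes are necessarily unequal and $g$ is injective on scope classes. This injectivity, together with the forward implications, upgrades the relations to the required biconditionals and completes the induction.
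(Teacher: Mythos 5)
Your proposal is correct and follows essentially the same route as the paper's own proof: induction on the number of copies, treating the merge of a fresh template copy onto the existing stack as a scope relabeling $g$ defined through the interface bijection $f$, and then invoking Lemma~\ref{lemma:relations} and Corollary~\ref{corollary:decomposability-completeness} to transfer completeness, decomposability, and the scope relations to the new copy. If anything, your write-up is more careful than the paper's on one point: the paper passes silently from the one-directional implications supplied by Lemma~\ref{lemma:relations} to the biconditionals demanded by clauses~2 and~3 of Definition~\ref{def:invariance}, whereas you explicitly flag this gap and close it with the injectivity-on-scope-classes argument (disjoint nonempty concrete scopes cannot be equal), which is a legitimate repair of a step the paper leaves implicit.
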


\begin{proof}
We give a proof by induction based on the number of copies of the template network. For the base case, consider a stack of one copy of the template network.  Since the template network is invariant, then a stack of one copy of the template network is invariant.  For the induction step, assume that $n$ copies of the template network are invariant.  This means that there is a bijective function $f$ that maps each input of the first template to an output of the $n^{th}$ template such that for all pairs $i,j$ of inputs to the first template, the following properties hold:
\begin{align}
& scope(i) = scope(j) \iff scope(f(i)) = scope(f(j)) \\
& scope(i) \cap scope(j) = \emptyset \iff scope(f(i)) \cap scope(f(j)) = \emptyset \\
& scope(f(i)) = scope(f(j)) \vee scope(f(i)) \cap scope(f(j)) = \emptyset \label{eq:disjoint-or-identical}
\end{align}
Let $g$ be a function that maps the scope of each input $i$ of the first template to the scope of the output of the $n^{th}$ template according to $f$:
\begin{equation}
g(scope(i)) = scope(f(i)) 
\end{equation}
Since assigning the scopes of the output nodes of the bottom network to the input nodes of the $n+1^{th}$ template ensures that the $n+1^{th}$ template is complete and decomposable and $g$ can be viewed as a relabeling of those scopes, then by Eq.~\ref{eq:disjoint-or-identical}, Lemma~\ref{lemma:relations} and Corollary~\ref{corollary:decomposability-completeness}, the $n+1^{th}$ template is also invariant. As a result, the entire stack of $n+1$ templates is invariant.
\end{proof}

We are now ready to prove the main theorem.

\vspace{0.5cm}
\noindent {\bf Theorem~\ref{thm:DSPN_valid}}
{\em If (a) the bottom network is complete and decomposable, (b) the scopes of all pairs of output interface nodes of the bottom network are either identical or disjoint, (c) the scopes of the output interface nodes of the bottom network can be used to assign scopes to the input interface nodes of the template and top networks in such a way that the template network is invariant and the top network is complete and decomposable, then the corresponding DSPN is complete and decomposable.}
\vspace{0.5cm}

\begin{proof}
The bottom network is complete and decomposable by assumption.  Since we also assume that the scopes of all pairs of the output interface nodes of the bottom network are either identical or disjoint and  the output interface nodes of the bottom network can be used to assign scopes to the interface nodes of the template network, then by Lemma~\ref{lemma:invariance} a stack of any number of template networks is invariant (and therefore complete and decomposable).  Finally we show that the top network is also complete and decomposable.  Let $f$ be a bijective function that associates each input of the first template to an output of the last template such that for all pairs $i,j$ of inputs to the first template, the following properties hold:
\begin{align}
& scope(i) = scope(j) \iff scope(f(i)) = scope(f(j)) \\
& scope(i) \cap scope(j) = \emptyset \iff scope(f(i)) \cap scope(f(j)) = \emptyset \\
& scope(f(i)) = scope(f(j)) \vee scope(f(i)) \cap scope(f(j)) = \emptyset \label{eq:disjoint-or-identical2}
\end{align}
Let $g$ be a function that maps the scope of each input $i$ of the first template to the scope of the output of the last template according to $f$:
\begin{equation}
g(scope(i)) = scope(f(i)) 
\end{equation}
Since assigning the scopes of the output nodes of the bottom network to the input nodes of the top network ensures that the top network is complete and decomposable and $g$ can be viewed as a relabeling of those scopes, then by Eq.~\ref{eq:disjoint-or-identical} and Corollary~\ref{corollary:decomposability-completeness}, the top network is complete and decomposable.
\end{proof}

\begin{small}
\bibliography{reference}

\begin{thebibliography}{31}
\providecommand{\natexlab}[1]{#1}
\providecommand{\url}[1]{\texttt{#1}}
\expandafter\ifx\csname urlstyle\endcsname\relax
  \providecommand{\doi}[1]{doi: #1}\else
  \providecommand{\doi}{doi: \begingroup \urlstyle{rm}\Url}\fi

\bibitem[Alimoglu and Alpaydin(1996)]{alimoglu1996methods}
F.~Alimoglu and E.~Alpaydin.
\newblock Methods of combining multiple classifiers based on different
  representations for pen-based handwritten digit recognition.
\newblock In \emph{Turkish AI and Artificial Neural Networks Symp.}, 1996.

\bibitem[Baum et~al.(1970)Baum, Petrie, Soules, and
  Weiss]{baum1970maximization}
L.~Baum, T.~Petrie, G.~Soules, and N.~Weiss.
\newblock A maximization technique occurring in the statistical analysis of
  probabilistic functions of {Markov} chains.
\newblock \emph{The annals of mathematical statistics}, pages 164--171, 1970.

\bibitem[Boyen and Koller(1998)]{boyen1998tractable}
X.~Boyen and D.~Koller.
\newblock Tractable inference for complex stochastic processes.
\newblock In \emph{UAI}, pages 33--42, 1998.

\bibitem[Brandherm and Jameson(2004)]{brandherm2004extension}
B.~Brandherm and A.~Jameson.
\newblock An extension of the differential approach for {Bayesian} network
  inference to dynamic {Bayesian} networks.
\newblock \emph{International Journal of Intelligent Systems}, 19\penalty0
  (8):\penalty0 727--748, 2004.

\bibitem[Darwiche(2003)]{darwiche2003differential}
A.~Darwiche.
\newblock A differential approach to inference in {Bayesian} networks.
\newblock \emph{JACM}, 50\penalty0 (3):\penalty0 280--305, 2003.

\bibitem[Dean and Kanazawa(1989)]{dean1989model}
T.~Dean and K.~Kanazawa.
\newblock A model for reasoning about persistence and causation.
\newblock \emph{Computational intelligence}, 5\penalty0 (2):\penalty0 142--150,
  1989.

\bibitem[Dennis and Ventura(2012)]{dennis2012learning}
A.~Dennis and D.~Ventura.
\newblock Learning the architecture of sum-product networks using clustering on
  variables.
\newblock In \emph{NIPS}, 2012.

\bibitem[Forbes et~al.(1995)Forbes, Huang, Kanazawa, and
  Russell]{forbes1995batmobile}
J.~Forbes, T.~Huang, K.~Kanazawa, and S.~Russell.
\newblock The batmobile: Towards a {Bayesian} automated taxi.
\newblock In \emph{IJCAI}, pages 1878--1885, 1995.

\bibitem[Friedman et~al.(1998)Friedman, Murphy, and
  Russell]{friedman1998learning}
N.~Friedman, K.~Murphy, and S.~Russell.
\newblock Learning the structure of dynamic probabilistic networks.
\newblock In \emph{UAI}, pages 139--147, 1998.

\bibitem[Gens and Domingos(2013)]{gens2013learning}
R.~Gens and P.~Domingos.
\newblock Learning the structure of sum-product networks.
\newblock In \emph{ICML}, pages 873--880, 2013.

\bibitem[Hammami and Sellam(2009)]{hammami2009tree}
N.~Hammami and M.~Sellam.
\newblock Tree distribution classifier for automatic spoken arabic digit
  recognition.
\newblock In \emph{Internet Technology and Secured Transactions}, pages 1--4,
  2009.

\bibitem[Hochreiter and Schmidhuber(1997)]{hochreiter1997}
S.~Hochreiter and J.~Schmidhuber.
\newblock Long short-term memory.
\newblock \emph{Neural Comput.}, 9\penalty0 (8):\penalty0 1735--1780, 1997.

\bibitem[Jensen et~al.(1989)Jensen, Kj{\ae}rulff, Olesen, and
  Pedersen]{jensen1989forprojekt}
F.~Jensen, U.~Kj{\ae}rulff, K.~Olesen, and J.~Pedersen.
\newblock Et forprojekt til et ekspertsystem for drift af spildevandsrensning
  (an expert system for control of waste water treatment).
\newblock Technical report, 1989.

\bibitem[Knuth(2006)]{Knuth:2006:ACP:1121689}
D.~Knuth.
\newblock \emph{The Art of Computer Programming, Volume 4, Fascicle 4:
  Generating All Trees--History of Combinatorial Generation (Art of Computer
  Programming)}.
\newblock 2006.

\bibitem[Koller and Friedman(2009)]{koller2009probabilistic}
D.~Koller and N.~Friedman.
\newblock \emph{Probabilistic graphical models: principles and techniques}.
\newblock 2009.

\bibitem[Kudo et~al.(1999)Kudo, Toyama, and Shimbo]{kudo1999multidimensional}
M.~Kudo, J.~Toyama, and M.~Shimbo.
\newblock Multidimensional curve classification using passing-through regions.
\newblock \emph{Pattern Recognition Letters}, 20\penalty0 (11):\penalty0
  1103--1111, 1999.

\bibitem[Liang et~al.(1998)Liang, Fuhrman, and Somogyi]{liang1998reveal}
S.~Liang, S.~Fuhrman, and R.~Somogyi.
\newblock Reveal, a general reverse engineering algorithm for inference of
  genetic network architectures.
\newblock In \emph{Pacific symp. on biocomputing}, volume~3, pages 18--29,
  1998.

\bibitem[Lichman(2013)]{Lichman:2013}
M.~Lichman.
\newblock {UCI} machine learning repository, 2013.
\newblock URL \url{archive.ics.uci.edu/ml}.

\bibitem[Lowd and Domingos(2012)]{lowd2012learning}
D.~Lowd and P.~Domingos.
\newblock Learning arithmetic circuits.
\newblock \emph{arXiv:1206.3271}, 2012.

\bibitem[Murphy(2001)]{Murphy01thebayes}
K.~Murphy.
\newblock The {Bayes} net toolbox for matlab.
\newblock \emph{Computing Science and Statistics}, 33, 2001.

\bibitem[Neapolitan(2004)]{neapolitan2004learning}
R.~Neapolitan.
\newblock \emph{Learning {Bayesian} networks}, volume~38.
\newblock Prentice Hall, 2004.

\bibitem[Park and Darwiche(2004)]{park2004differential}
J.~Park and A.~Darwiche.
\newblock A differential semantics for jointree algorithms.
\newblock \emph{Artificial Intelligence}, 156\penalty0 (2):\penalty0 197--216,
  2004.

\bibitem[Peharz(2015)]{peharz2015foundations}
R.~Peharz.
\newblock \emph{Foundations of Sum-Product Networks for Probabilistic
  Modeling}.
\newblock PhD thesis, Medical University of Graz, 2015.

\bibitem[Peharz et~al.(2013)Peharz, Geiger, and Pernkopf]{peharz2013greedy}
R.~Peharz, B.~Geiger, and F.~Pernkopf.
\newblock Greedy part-wise learning of sum-product networks.
\newblock In \emph{ECML PKDD}, pages 612--627, 2013.

\bibitem[Peharz et~al.(2014)Peharz, Kapeller, Mowlaee, and
  Pernkopf]{peharz2014modeling}
R.~Peharz, G.~Kapeller, P.~Mowlaee, and F.~Pernkopf.
\newblock Modeling speech with sum-product networks: Application to bandwidth
  extension.
\newblock In \emph{ICASSP}, pages 3699--3703, 2014.

\bibitem[Poon and Domingos(2011)]{poon2011sum}
H.~Poon and P.~Domingos.
\newblock Sum-product networks: A new deep architecture.
\newblock In \emph{UAI}, pages 2551--2558, 2011.

\bibitem[Rooshenas and Lowd(2014)]{rooshenas2014learning}
A.~Rooshenas and D.~Lowd.
\newblock Learning sum-product networks with direct and indirect variable
  interactions.
\newblock In \emph{ICML}, pages 710--718, 2014.

\bibitem[Sutskever et~al.(2014)Sutskever, Vinyals, and Le]{SutskeverVL2014}
I.~Sutskever, O.~Vinyals, and Q.~Le.
\newblock Sequence to sequence learning with neural networks.
\newblock In \emph{NIPS}, pages 3104--3112, 2014.

\bibitem[{Theano Development Team}(2016)]{theano}
{Theano Development Team}.
\newblock {Theano: A {Python} framework for fast computation of mathematical
  expressions}.
\newblock \emph{arXiv e-prints}, abs/1605.02688, 2016.

\bibitem[Williams and Peng(1990)]{williams1990}
R.~Williams and J.~Peng.
\newblock An efficient gradient-based algorithm for online training of
  recurrent network trajectories.
\newblock \emph{Neural Computation}, 2\penalty0 (4):\penalty0 490--501, 1990.

\bibitem[Zhao et~al.(2015)Zhao, Melibari, and Poupart]{zhao2015spnbn}
H.~Zhao, M.~Melibari, and P.~Poupart.
\newblock On the relationship between sum-product networks and {Bayesian}
  networks.
\newblock In \emph{ICML}, 2015.

\end{thebibliography}
\end{small}
\end{document}